\documentclass[10pt, letterpaper]{article}

\usepackage[accepted]{tmlr}

\def\viewchanges{1}
\def\addkeywords{1}

\def\addackn{1}

%%% url and hyperref
\usepackage{hyperref}
\usepackage{url}
%%% tables
\usepackage{booktabs}       % professional-quality tables
\usepackage{multirow}
\usepackage{array}
\usepackage{multicol}
%%% maths
\usepackage{amsmath}
\usepackage{amsthm}
\usepackage{amsfonts}       % blackboard math symbols
\usepackage{amssymb}
\usepackage{nicefrac}       % compact symbols for 1/2, etc.
%%% graphics
\usepackage{graphicx}
\usepackage{caption}
\usepackage{subcaption}
%%% algorithm
\usepackage{algorithmic}
\usepackage{algorithm}
\usepackage{float}
%%% enumerate 
\usepackage{enumerate}
%%% 
\usepackage{xr}             %Cross referencing 
%%% enable macros and new comands
\usepackage{etoolbox}
\usepackage{xparse}
%% colors
% \usepackage[dvipsnames]{xcolor}
\usepackage{colortbl}
\definecolor{Gray}{gray}{0.85}
\definecolor{light-gray}{gray}{0.85}
\definecolor{forestgreen}{rgb}{0.13, 0.55, 0.13}
%%% images
\newcommand{\foldermarket}{plots/}
%%% maths symbols
\newcommand{\N}{\mathbb{N}}
\renewcommand{\P}{\mathbb{P}}

\newcommand{\E}{\mathbb{E}}
\newcommand{\R}{\mathbb{R}}

% Denise
\newtheorem{theorem}{Theorem}[section]
\newtheorem{assumption}[theorem]{Assumption}
\newtheorem{lemma}[theorem]{Lemma}
\newtheorem{corollary}[theorem]{Corollary}

\newcommand{\nn}{{\mathbb{N}}}
\newcommand{\pp}{{\mathbb{P}}}
\newcommand{\rrflex}[1]{{\ensuremath{\mathbb{R}^{#1}}}}
\newcommand{\rr}{{\mathbb{R}}}
\newcommand{\rrn}{{\rrflex{n}}}
\NewDocumentCommand\argmin{o}{{\operatorname{argmin}\IfValueT{#1}{_{{#1}}}}}
\NewDocumentCommand{\NN}{oo}{
	{\mathcal{N}\IfValueT{#2}{^{#2}}\IfValueF{#2}{^{\sigma}}
		\IfValueF{#1}{_{n}}
		\IfValueT{#1}{_{{{{#1}}}}}
	}}

% comments in colors

\let\del\undefined
\let\com\undefined

%%% display changes in color:
\if\viewchanges1
	
	\newcommand{\del}[1]{{\color{red}{#1}}}
	\newcommand{\com}[1]{{\color{orange}{#1}}}
	
%%% apply changes:
\else
	
	\newcommand{\del}[1]{}
	\newcommand{\com}[1]{}
	
\fi

%%% for camera-ready version
  % Insert correct month for camera-ready version
 % Insert correct year for camera-ready version
 % Insert correct link to OpenReview for camera-ready version

\begin{document}

\title{Denise: Deep Robust Principal Component Analysis for Positive Semidefinite Matrices}

\author{\name Calypso Herrera \email calypso.herrera@math.ethz.ch \\
        \addr Department of Mathematics\\ ETH Z\"{u}rich
        \ANDD
       \name Florian Krach \email florian.krach@math.ethz.ch\\
       \addr Department of Mathematics\\ ETH Z\"{u}rich
       \ANDD
       \name Anastasis Kratsios \email anastasis.kratsios@unibas.ch\\
        \addr Department Mathematics\\ McMaster University
       \ANDD
       \name Pierre Ruyssen \email pierrot@google.com \\
       \addr Google Brain\\ Google Z\"{u}rich
       \ANDD
       \name Josef Teichmann \email josef.teichmann@math.ethz.ch \\
       \addr Department of Mathematics\\ ETH Z\"{u}rich
}

\maketitle

\begin{abstract}
The robust PCA of covariance matrices plays an essential role when isolating key explanatory features.  The currently available methods for performing such a low-rank plus sparse decomposition are matrix specific, meaning, those algorithms must re-run for every new matrix.  Since these algorithms are computationally expensive, it is preferable to learn and store a function that nearly instantaneously performs this decomposition when evaluated.  Therefore, we introduce Denise, a deep learning-based algorithm for robust PCA of covariance matrices, or more generally, of symmetric positive semidefinite matrices, which learns precisely such a function.  Theoretical guarantees for Denise are provided.  These include a novel universal approximation theorem adapted to our geometric deep learning problem and convergence to an optimal solution to the learning problem.  Our experiments show that Denise matches state-of-the-art performance  in terms of decomposition quality, while being approximately $2000\times$ faster than the state-of-the-art, principal component pursuit (PCP), and $200 \times$ faster than the current speed-optimized method, fast PCP. 
\end{abstract}

\if\addkeywords1
\textbf{Keywords:}
Low-rank plus sparse decomposition, positive semidefinite matrices, deep neural networks, geometric deep learning, universal approximation
\fi

\section{Introduction}
\emph{Robust principal component analysis} (RPCA) aims to find a low rank subspace that best approximates a data matrix $M$ which has corrupted entries. It is defined as the problem of decomposing a given matrix $M$ into the sum of a low rank matrix $L$, whose column subspace gives the principal components, and a sparse matrix S, which corresponds to the outliers' matrix. The standard method via \emph{convex optimization} has significantly worse computation time than the \emph{singular value decomposition} (SVD) \citep{NIPS2009_3704, NIPS2010_4005,Candes:2011,5707106,5934412,NIPS2011_4434}. Recent results developing efficient algorithms for robust PCA contributed to notably reduce the running time \citep{Rodriguez2013, NIPS2014_5430, Chen, NIPS2016_6445, pmlr-v70-cherapanamjeri17a}. 

However, in some cases, it is of utmost importance to  \emph{instantaneously} produce robust low rank approximations of a given matrix. In particular, in finance we need instantaneously and for long time series of multiple assets, robust low rank estimates of covariance matrices. For instance, this is the case for high-frequency trading \citep{ait2010high,ait2017,ait2019}.
Moreover, it is useful
% \cha{important}{useful} 
to have \emph{one} procedure applicable to different data that provides such estimates
%\add{. 
In addition, in applications involving noise, like covariance matrices in finance, it is important to have a procedure that 
%} \del{and} 
is insensitive to small noise perturbations, which is not the case for 
%\cha{in}{for} 
classical approaches.

Our contribution lies precisely in this area by introducing 
% \cha{an}{a nearly} 
a nearly
instantaneous algorithm for robust PCA for symmetric positive semidefinite matrices. Specifically, we provide a simple deep learning based algorithm which ensures continuity with respect to the input matrices, such that small perturbations lead to small changes in the output.
%\del{, while this is not the case for classical methods}
Moreover, when the deep neural network is trained, only an evaluation of it is needed to decompose any new matrix. Therefore the computation time is negligible, which is an undeniable advantage in comparison with the classical algorithms. 
To support our claim, theoretical guarantees are provided for (i) the expressiveness of our neural network architecture and (ii) convergence to an optimal solution of the learning problem.

\section{Related Work}

Let $||M||_{\ell_1} = \sum_{i,j}{|M_{i,j}|}$ denote the $\ell^1$-norm of the matrix $M$. For a given $\lambda>0$, the RPCA is formulated as
\begin{equation*}
\min_{L,S}  \hbox{rank} (L)+ \lambda ||S||_{\ell_1} \quad \hbox{s.t }~ M = L + S\,.
\end{equation*}
Although it is $\mathcal{NP}$-hard, approximate relaxations of this minimization problem can be solved in polynomial time.
The most popular method to solve RPCA is via \textit{convex relaxation} \citep{NIPS2009_3704, NIPS2010_4005,Candes:2011,5707106,5934412,NIPS2011_4434}. It consists of a nuclear-norm-regularized matrix approximation which needs a time-consuming full singular value decomposition (SVD) in each iteration. Let $||M||_{*} = \sum_{i}{|\sigma_i(M)|}$ denote the nuclear norm of $M$, i.e. the sum of the singular values of $M$. Then for a given $\lambda>0$, the problem can be formulated as 
\begin{equation}
\label{convex-relaxation-formulation}
\min_{L,S} ||L||_{*}+ \lambda ||S||_{\ell_1} \quad \hbox{s.t }~ M = L + S.
\end{equation}
The principal component pursuit (PCP) \citep{Candes:2011} is considered as the state-of-the-art technique and solves \eqref{convex-relaxation-formulation} by an \emph{alternating directions algorithm} which is a special case of a more general class of augmented Lagrange multiplier (ALM) algorithms known as \emph{alternating directions methods}. The inexact ALM (IALM) \citep{NIPS2011_4434} is an computationally improved version of the ALM algorithm that reduces the number of SVDs needed.

As the previous algorithms need time-consuming SVDs in each iteration,  several non convex algorithms have been proposed to solve \eqref{convex-relaxation-formulation} for a more efficient decomposition of high-dimensional matrices \citep{Rodriguez2013, NIPS2014_5430, Chen, NIPS2016_6445, pmlr-v70-cherapanamjeri17a}. In particular, the fast principal component  pursuit (FPCP)  \citep{Rodriguez2013} is an alternating minimization algorithm for solving a variation of  \eqref{convex-relaxation-formulation}. By incorporating the constraint into the objective, removing the costly nuclear norm term, and imposing a rank constraint on $L$, the problem becomes
\begin{equation*}
\min_{L,S} \tfrac{1}{2}||M-L-S||^2_{F}+ \lambda ||S||_{\ell_1} \quad \hbox{s.t.} \quad \hbox{rank}(L)=r \,.
\end{equation*}
The authors apply an alternating minimization to solve this equation using a partial SVD. The RPCA via gradient descent (RPCA-GD) \citep{NIPS2016_6445} solves \eqref{convex-relaxation-formulation} via a gradient descent method.

Our work is related to the low rank Cholesky factorization, which, among others, is used to solve semidefinite programs
\citep{Burer01anonlinear, journee2008low, JourneBach2010, pmlr-v49-bandeira16, de2014global, Boumal2016, li2019non, Rong2016}. We are not only interested in the low rank approximation, but in a \emph{robust} low rank approximation. In that sense, we estimate the low rank approximation of a matrix which can be corrupted by outliers. Therefore, we are using the $\ell_1$ norm instead of the Frobenius norm as it is done in those works.

The closest related works to ours are \cite{song1997robust} and \citep{baes2019lowrank}, which both consider neural network approaches to robust PCA.  Only the latter of the two provides optimization guarantees; there the authors consider 
the minimization problem
\begin{equation}\label{eq_target_problem}
\min_{U \in \R^{n \times k}} \lVert M - U U^\top \rVert_{\ell_1},
\end{equation}
where a neural network parameterization $U_\theta$ of the matrix $U$ is optimized  with gradient descent to find an approximate solution for any fixed $M$.  In particular, for every new input $M$ the optimization has to be repeated. 
In contrast, we train a neural network on a synthetic training dataset such that the learnt parameters can be reused for any unseen matrix $M'$.
In particular, our learning objective is much more involved, since we want to find a function that produces good outputs $U_\theta(M)$ for all $M$ of a certain distribution, i.e. a function that generalizes well. 
While our learning task is more complicated, our method has the advantage of nearly instantaneous evaluation for any new matrix, once the training is finished, compared to \citep{baes2019lowrank}, where a new optimization problem needs to be solved whenever the method is applied to a new matrix $M$.
% \del{
% The proof of convergence of the training
% scheme provided in \citep{baes2019lowrank} uses Lipschitz bounds of the gradient of the loss function with respect to the parameters. 
% In \citet{Lipschitz2020}, the computation of those Lipschitz bounds is extended to all possible loss functions and therefore, it is also applicable to Denise (Appendix \ref{sec:Convergence of the Optimization Scheme}).
% }

Other related problems are \emph{matrix factorization} \citep{NIPS2000_1861, 4685898, Trigeorgis:2014, Kuang2012}, \emph{matrix completion} \citep{Xue:2017, 2018arXiv181201478N, Sedhain:2015}, \emph{sparse coding} \citep{gregor2010learning, ablin2019learning}, \emph{robust subspace tracking} \citep{he2011online, narayanamurthy2018nearly} and  \emph{anomaly detection} \citep{chalapathy2017robust}.
\cite{solomon2019deep} suggested a deep robust PCA algorithm tailored to clutter suppression in ultrasound, which still depends on applying SVDs in each layer of their convolutional recurrent neural network.
Our work is similar to \citet{gregor2010learning} in spirit, since we also train a neural network to perform a complex and otherwise time-consuming task nearly instantaneously.
Our method shares many properties with their encoder, including continuity, differentiability, and implicit generalization over the distribution of the training set. While their encoder can only be trained in a supervised manner, relying on classical (and therefore slow) methods to generate labels for the training set, it is possible to use unsupervised training for our method.

A key component of our approach is the universal approximation capability of the deep neural model implementing Denise. This result is not covered by any of the available universal  approximation theorems, including those for standard feedforward neural networks \citep{hornik1989multilayer, barron1992neural, KidgerLyons2020}  and those concerning non-euclidean geometries \citep{kratsios2020noneuclidean}.   In contrast, our universal approximation result guarantees that we can generically approximate any function encoding both the geometric and algebraic structure of the low-rank plus sparse decomposition problem.

\section{Denise}\label{sec:main}
We present \textit{{Denise}}\footnote{The name \textit{Denise} comes from \textbf{De}ep and \textbf{Se}midefi\textbf{ni}te.}, 
an algorithm that solves the robust PCA for positive semidefinite matrices, using a deep neural network.
The main idea is the following: according to the Cholesky decomposition, a positive semidefinite symmetric matrix $L \in \mathbb{R}^{n\times n}$  can be decomposed into $L=UU^{\top}$. If $U$ has $n$ rows and $r$ columns, then the matrix $L$ will be of rank $r$ or less. In order to obtain the desired decomposition $M = L + S$, we therefore reduce the problem to finding  a matrix $U\in\mathbb{R}^{n\times r}$ such that $S := M-UU^{\top}$ is a sparse matrix, i.e. a matrix that contains a lot of zero entries.
In particular, we define the matrix $U = U_{\theta}(M) \in \R^{n \times r}$ as the output of a neural network. Then the natural objective of the training  of the neural network is to achieve sparsity of $S_{\theta}(M) := M - U_{\theta}(M)U_{\theta}(M)^\top$.
A good and widely used approximation of this objective is to minimize the  $\ell_1$-norm of $S_{\theta}(M)$ as in \eqref{eq_target_problem}.
To achieve this, the neural network can be trained in a supervised or an unsupervised way, as explained below, depending on the available training dataset.
Once Denise is trained, we only need to evaluate it in order to find the low rank plus sparse decomposition
\begin{equation*}
M = \underbrace{U_{\theta}(M)U_{\theta}(M)^{\top}}_{L} + \underbrace{M-U_{\theta}(M)U_{\theta}(M)^{\top}}_{S}
\end{equation*}
 of any new positive semidefinite matrix $M$. 
Therefore, Denise considerably outperforms all existing algorithms in terms of speed, as they need to solve an optimization problem for each new matrix.

Moreover, by the construction of $L=U_\theta(M) U_\theta(M)^T$, we can guarantee the positive semidefiniteness of $L$.  We note that in practice one may only have access to exogenously manipulated or corrupt (missing data) covariance or correlation matrices which may cause the loss of their positive semidefiniteness. For example, an empirical correlation matrix of stock returns, where the correlation between two stock returns is decreased by the risk manager, may lose its positive semidefiniteness. The issue of non positive semidefiniteness of correlation matrices in option pricing and risk management is well explained in \citep{Rebonato1998validcorrelationmatrix}.	
	We refer to \cite{Higham2016} for a detailed discussion of this issue.
	By contrast, most algorithms do not ensure that $L$ is kept positive semidefinite, which forces them to correct their output at the expense of their accuracy.

\subsection{Supervised Learning}
If a training set is available where for each matrix $M$ an optimal decomposition into $L+S$ is known, then the network can be  trained directly to output the correct low rank matrix, by minimizing the supervised loss
\begin{equation}\label{equ:supervised loss function}
\Phi_s(\theta) :=  \E \left[||L-U_{\theta}(M)U_{\theta}(M)^{\top}||_{\ell_1} \right] 
	 = \E \left[||S-S_{\theta}(M) ||_{\ell_1}  \right] \,.
\end{equation} 
where the expectation is taken over matrices drawn from an appropriate data-generating distribution from which the training data is sampled.
We want the difference $S - S_\theta$ to be as sparse as possible, therefore we use the $\ell_1$-norm, which approximates this objective. Indeed, if this difference is sparse, then also $S_\theta$ is, since the amount of non-zero entries of $S_\theta$ is upper bounded by the sum of those of $S$ and $S - S_\theta$. On the other hand, a small $\ell_2$-norm of $S - S_\theta$ would not imply any upper bound on the non-zero entries of $S_\theta$.

A synthetic dataset of positive semidefinite matrices with known decomposition can be created by simulating Cholesky factors and sparse matrices (Section~\ref{sec:numerical results}).
Moreover, classical methods can be used to generate labels for any set of matrices that a user would like to use as training set, in case the synthetic dataset doesn't encompass the wanted properties. However, this is not necessarily needed, since unsupervised training can be used instead.

\subsection{Unsupervised Learning}
In some applications only the matrix $M$ but no optimal decomposition is known. In this case, the neural network can be trained by minimizing the unsupervised loss function
\begin{equation}
\label{eq:loss}
\Phi_u(\theta) :=  \E \left[||M-U_{\theta}(M)U_{\theta}(M)^{\top}||_{\ell_1} \right]
 = \E \left[||S_{\theta}(M)||_{\ell_1} \right] \,,
\end{equation} 
where, as in the supervised case, the expectation is taken over matrices drawn from an appropriate data-generating distribution from which the training data is sampled.

\subsection{Combining Supervised Learning and Unsupervised Finetuning}
Often the amount of available training data of a real world dataset is limited. 
Therefore, we consider the following training procedure. First, Denise is trained with the supervised loss function on a large synthetic dataset, where the decomposition is known
(Section \ref{Training}). Then the trained network can be finetuned with the unsupervised loss function on a real world training dataset  of matrices, where the optimal decomposition is unknown. 
This way, Denise can incorporate  the peculiarities of the real world dataset.

\section{Theoretical Guarantees for Denise}

We provide theoretical guarantees that on every compact subset of symmetric positive semidefinite matrices, the function performing the optimal low-rank plus sparse decomposition can be approximated arbitrarily well by the neural network architecture of Denise. The proofs are presented in Section~\ref{sec:Proofs}.
% \del{Furthermore, we show that the optimization procedure by which Denise is trained converges to a stationary point of the robust PCA problem.}

\subsection{Notation}

Let $\mathbb{S}_{n}$ be the set of $n$-by-$n$ symmetric matrices, $P_n \subset \mathbb{S}_{n}$ the subset of positive semidefinite matrices and $P_{k,n} \subset P_n$ the subset of matrices with rank at most $k \leq n$.
We consider a matrix $M=\left[M_{i,j}\right]_{i,j}\in P_n$, e.g., a covariance matrix. The matrix $M$ is to be decomposed
as a sum of a matrix $L=\left[L_{i,j}\right]_{i,j}\in P_{k,n}$
of rank at most $k$ and a sparse matrix  $S=\left[S_{i,j}\right]_{i,j}\in P_n$.
By the Cholesky decomposition \citep[Thm 10.9 b]{higham2002accuracy}, we know that the matrix $L$ can be represented as $L=UU^{\top}$, where $U=\left[U_{i,j}\right]_{i,j}\in\mathbb{R}^{n\times k}$; thus $M=UU^{\top}+S$. 

Let  $f_\theta : \mathbb{R}^{n(n+1)/2} \to \mathbb{R}^{nk}$ be a feedforward neural network with parameters $\theta$. As the matrix $M$ is symmetric, the dimension of the input can be reduced from $n^2$ to $n(n+1)/2$ by taking the triangular lower matrix of $M$. Moreover, we convert the triangular lower matrix to a vector. We combine these two transformations in the operator $h$ 
%converting a symmetric $n$-by-$n$ matrix to a vector of dimension $n(n+1)/2$
\begin{equation*}
	h :\mathbb{S}_{n} \to \mathbb{R}^{n(n+1)/2}, \quad
	M \mapsto (M_{1,1}, M_{2,1}, M_{2,2},\dots,M_{n,1}, \dots , M_{n,n})^{\top}\,.
\end{equation*}

Similarly, every vector $X$ of dimension $nk$ can be represented as a $n$-by-$k$ matrix with the operator $g$ defined as
\begin{equation*}
g :\mathbb{R}^{nk} \to \mathbb{R}^{n \times k}, \quad
	X \mapsto
	\begin{pmatrix}
	X_1 & \dots & X_{k} \\
	\vdots &    & \vdots \\
	X_{(n-1)k + 1} & \dots & X_{(n-1)k + k}
	\end{pmatrix}.
\end{equation*}
Using $h$ and $g$, the matrix $U$ can be expressed as the output of the neural network $
U_{\theta} (M) = g(f_{\theta}\left(h(M)\right))$
and the low rank matrix can be expressed as 
$L_{\theta} (M) = \rho(f_{\theta}\left(h(M)\right))$ for 
\begin{equation*}
\begin{aligned}
\rho:  \rrflex{k n} &\rightarrow P_{k,n}, \quad X \mapsto g(X) g(X)^{\top} .
\end{aligned}
\end{equation*}

We assume to have a set $\mathcal{Z} \subset \mathbb{S}_{n} \times P_{k,n}$ of training sample matrices $(M,L)$, which is equipped with a probability measure $\P$, i.e. the data-generating distribution. In the supervised case, we assume that $L$ is an optimal low rank matrix for $M$, while in the unsupervised case, where $L$ is not used, it can simply be set to $0$. 
For a given training sample $(M,L)$, the supervised and unsupervised loss functions $\varphi_s, \varphi_u : \Omega \times \mathcal{Z} \to \mathbb{R}$ are defined as
\begin{equation}
\label{eq:supervised_loss}
 \varphi_s(\theta, M, L) =  \left\Vert L- \rho\left(f_{\theta}\left(h(M)\right)\!\right)\right\Vert_{\ell_1}
\end{equation}
and
\begin{equation}
\label{eq:unsupervised_loss}
 \varphi_u(\theta, M, L) =  \left\Vert M- \rho\left(f_{\theta}\left(h(M)\right)\!\right)\right\Vert_{\ell_1}\,.
\end{equation}
Then, the overall loss functions as defined in \eqref{equ:supervised loss function} and  \eqref{eq:loss} can be expressed for $\varphi \in \{ \varphi_s, \varphi_u \}$
\begin{equation*}
\Phi(\theta)  
 =  \E_{(M,L) \sim \P}\left[\varphi(\theta, M, L)\right].
\end{equation*}
Moreover, the Monte Carlo approximations of these loss functions are given by
\begin{equation}\label{equ:MC approx loss func}
\hat \Phi^N(\theta)  
 = \frac{1}{N} \sum_{i=1}^N \varphi(\theta, M_i, L_i),
\end{equation}
where $(M_i, L_i)$ are i.i.d. samples of $\P$.
Denise can be trained using Stochastic Gradient Descent (SGD). A schematic version of these supervised and unsupervised training schemes is given in the pseudo-Algorithm~\ref{eq:gradient-method}. 
\begin{algorithm}[H]
\begin{algorithmic}
   \caption{Training of Denise}
   \label{eq:gradient-method}
   \STATE Fix $\theta_0 \in \Omega, N \in \N$
   \FOR{$j\geq 0$}
   \STATE Sample i.i.d. matrices $(M_1, L_1), \dotsc, (M_N, L_N) \sim \P$
   \STATE Compute the gradient $G_j := \tfrac{1}{N} \sum_{i=1}^N \nabla_{\theta}{\varphi}(\theta_{j}, M_i, L_i)$
   \STATE Determine a step-size  $h_{j}>0$
   \STATE Set $\theta_{j+1}=\theta_{j}-h_{j} G_j$
   \ENDFOR
\end{algorithmic}
\end{algorithm}

\subsection{Solution Operator to the Learning Problem}
Our first result guarantees that there is a (non-linear) solution operator to~\eqref{eq_target_problem}.  Thus, there is an optimal low rank plus sparse decomposition for Denise to learn.  
\begin{theorem}\label{theorem_univ_approx}
	Fix a Borel probability measure $\pp$ on $P_{n}$ and set $0<\varepsilon\leq 1$
% 	\del{ satisfying~\ref{cond_KL}}
. Then:
	\begin{enumerate}[(i)]
		%\item $\pp(B_{\varepsilon})<\varepsilon$.
		\item For every $M \in P_n$, the set of optimizers,
		$\underset{U \in \rrflex{n \times k}}{\operatorname{argmin}}\, \|M - U U^T\|_{\ell_1}$, is non-empty and every $U \in \underset{U \in \rrflex{n \times k}}{\operatorname{argmin}}\, \|M - U U^T\|_{\ell_1}$ satisfies
		\[
                L 
            := 
                U U^T \in \underset{L \in P_{k,n}}{\operatorname{argmin}}\, \|M - L\|_{\ell_1}
          .
        \]
		\item There exists a Borel-measurable function $f:P_n \rightarrow \rrflex{n \times k}$ satisfying for every $M \in P_n$
		\begin{equation*}
		f(M) \in \underset{U \in \rrflex{n \times k}}{\operatorname{argmin}}\, \|M - U U^T\|_{\ell_1}
%		,
.
		\end{equation*}
		\item There exists a compact $K_{\varepsilon}\subseteq P_{n}$ such that:
		$ \pp(K_{\varepsilon})\geq 1-\varepsilon$ and on which $f$ is continuous 
		and we define the function
		\begin{equation}
f^{\star}:K_{\varepsilon} \ni M \mapsto f(M)f(M)^{\top} \in P_{k,n} .
\label{eq_target_function_optimal_decomposer}
\end{equation}
	\end{enumerate}
\end{theorem}

Theorem~\ref{theorem_univ_approx} (iii) guarantees that the map $f^\star$
is continuous and can be written as the square of a continuous function $f$ from $K_{\varepsilon}$ to $\rrflex{n\times k}$.  

\subsection{Novel Universal Approximation Theorem}
We introduce a structured subset of $\rrflex{n\times n}$-valued functions encapsulating the relevant structural properties of the solution map in~\eqref{eq_target_function_optimal_decomposer}.   We fix a compact $X\subset P_n$.  
Denise's ability to optimally solve~\eqref{eq_target_problem} is contingent on its ability to uniformly approximate any function in
$
\sqrt{C}(X,P_{k,n}) := \left\{
f \in C(X,P_{k,n}) \left| \, \exists \tilde{f} \in C(X,\rrflex{n \times k}) :\, f = \tilde{f}\tilde{f}^{\top}
\right. \right\}
.
$

Unlike $C(X,\rrflex{n\times k})$, functions in $\sqrt{C}(X,P_{k,n})$ always output meaningful candidate solutions to~\eqref{eq_target_problem} since they are necessarily low-rank, symmetric, and positive semidefinite matrices.  Due to this non-Euclidean structure  the next result is not covered by the standard approximation theorems of \cite{hornik1989multilayer} and \cite{KidgerLyons2020}.  Similarly, every function in $\sqrt{C}(X,P_{k,n})$ encodes the algebraic property~\eqref{eq_target_function_optimal_decomposer}; namely, it admits a point-wise Cholesky-decomposition which is a continuous $\rrflex{n\times k}$-valued function.  Thus, $\sqrt{C}(X,P_{k,n})$ encapsulates more algebraic structure than $C(X,P_{k,n})$ does.  This algebraic structure puts approximation in $\sqrt{C}(X,P_{k,n})$ outside the scope of the purely geometric approximation theorems of \cite{kratsios2020noneuclidean}.  

Our next result concerns the universal approximation capabilities in $\sqrt{C}(X,P_{k,n})$ by the set of all deep neural models $\hat{f}:P_n\to P_{k,n}$ with representation $\hat{f}=\rho \circ f_{\theta} \circ h$, where $f_{\theta}:\rrflex{\frac{n(n+1)}{2}}\rightarrow \rrflex{kn}$ is a deep feedforward network with activation function $\sigma$.  Denote the set of all such models by $\NN[\rho, h]$.  

The width of $\hat{f}\in \NN[\rho, h]$ is defined as the width of $f_{\theta}$.  The activation function $\sigma$ defining $f_{\theta}$ is required to satisfy the following condition of~\cite{KidgerLyons2020}.  
\begin{assumption}\label{cond_KL}
	The activation function $\sigma\in C(\rr)$ is non-affine and differentiable at at-least one point with non-zero derivative at that point.  
\end{assumption}

\begin{theorem}\label{theorem_UAT_AlgebroGeo}
	Let $X\subset P_n$ be compact and let $\sigma\in C(\rr)$ satisfy Assumption~\ref{cond_KL}.  For each $\varepsilon>0$, and each $f\in \sqrt{C}(X,P_{k,n})$, there is an $\hat{f}\in \NN[
%	2^{-1}n(n+1),kn
	g,h
	]$ of width at-most $\frac{n(n+2k+1)+4}{2}$ such that:
	\begin{equation}\label{eq_lem_uni_condition}
	\max_{M \in X}\,
	\left\|
	f\left(M\right) - 
	\hat{f}(M)
%	\rho\circ \hat{f}\circ h\left(M\right)
	\right\|_{\ell_1}
	<\varepsilon .
	\end{equation}
\end{theorem}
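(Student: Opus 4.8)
The plan is to strip away the two linear ``bookkeeping'' maps $h$ and $g$, reduce to a purely Euclidean approximation problem for the continuous square root furnished by the definition of $\sqrt{C}(X,P_{k,n})$, invoke the deep narrow network theorem of \cite{KidgerLyons2020}, and then push the resulting estimate back through the quadratic output layer $\rho$.

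First I would use membership in $\sqrt{C}(X,P_{k,n})$ to fix a continuous $\tilde f\in C(X,\rrflex{n\times k})$ with $f=\tilde f\tilde f^{\top}$; this is the single place where the algebraic structure of $\sqrt{C}(X,P_{k,n})$ is genuinely exploited, and it is precisely what puts the statement outside plain approximation in $C(X,P_{k,n})$. Since $X\subset\mathbb{S}_{n}$ is compact and $h:\mathbb{S}_{n}\to\rrflex{n(n+1)/2}$ is a linear isomorphism onto its image, the set $h(X)$ is compact and $F:=g^{-1}\circ\tilde f\circ h^{-1}:h(X)\to\rrflex{kn}$ is continuous. It then suffices to construct a feedforward $\sigma$-network $f_{\theta}:\rrflex{n(n+1)/2}\to\rrflex{kn}$ of the prescribed width with $\sup_{x\in h(X)}\|f_{\theta}(x)-F(x)\|$ arbitrarily small: composing with the linear maps $g$ and $h$ only rescales the error by fixed constants, so $g\circ f_{\theta}\circ h$ approximates $\tilde f$ uniformly on $X$, while $\hat f:=\rho\circ f_{\theta}\circ h$ lies in $\NN[g,h]$, automatically outputs a rank-$\le k$ positive semidefinite matrix, and by definition has the same width as $f_{\theta}$, so the reshaping maps cost nothing in width.

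Next I would apply the universal approximation theorem of \cite{KidgerLyons2020} --- whose sole hypothesis on $\sigma$ is exactly Condition~\ref{cond_KL} --- which guarantees uniform approximation of continuous functions on compact subsets of $\rrflex{d_{\mathrm{in}}}$ by $\sigma$-networks of width $d_{\mathrm{in}}+d_{\mathrm{out}}+2$. With $d_{\mathrm{in}}=\tfrac{n(n+1)}{2}$ and $d_{\mathrm{out}}=kn$ this width equals $\tfrac{n(n+1)}{2}+kn+2=\tfrac{n(n+2k+1)+4}{2}$, which is the claimed bound; pinning down this constant (and confirming that prepending $h$ and appending $g$ leave the hidden-layer widths untouched) is the only delicate bit of the width accounting. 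Finally, to transport the estimate through $\rho$, I would set $B:=\max_{M\in X}\|\tilde f(M)\|<\infty$ and use the identity $AA^{\top}-A'A'^{\top}=(A-A')A^{\top}+A'(A-A')^{\top}$ together with submultiplicativity and equivalence of the matrix norms in play to obtain, whenever the inner error is at most $1$, a bound of the form $\|\rho(f_{\theta}(h(M)))-f(M)\|_{\ell_1}\le C(n,k)(2B+1)\,\|g(f_{\theta}(h(M)))-\tilde f(M)\|$; choosing the network accuracy from the previous step small enough then drives the right-hand side below $\varepsilon$ for all $M\in X$, which is~\eqref{eq_lem_uni_condition}.

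Overall the main obstacle is mild: once the continuous Cholesky factor $\tilde f$ is in hand there is no deep difficulty, and the proof amounts to (i) observing that $\sqrt{C}(X,P_{k,n})$ is defined precisely so that approximating a continuous square root suffices, (ii) a careful width bookkeeping to hit $\tfrac{n(n+2k+1)+4}{2}$ on the nose, and (iii) a routine ``Lipschitz on bounded sets'' estimate for $A\mapsto AA^{\top}$ so that the approximation survives the nonlinear output layer.
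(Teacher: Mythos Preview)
Your argument is correct and follows the same three-step skeleton as the paper: reduce to approximating the continuous Cholesky factor $\tilde f$, invoke Kidger--Lyons with the width count $\tfrac{n(n+1)}{2}+kn+2$, and then push the estimate through the quadratic map $U\mapsto UU^{\top}$.

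The execution, however, is genuinely more elementary than the paper's. The paper argues at the level of topological density: it cites \cite[Proposition~3.7]{kratsios2020noneuclidean} to transport density of narrow networks from $C(\rrflex{n(n+1)/2},\rrflex{kn})$ to $C(X,\rrflex{n\times k})$ along the continuous injection $h$ and the homeomorphism $g$, and then invokes the compact-open topology machinery of \cite[Theorems~46.8 and~46.11]{MunkesTop2} to show that post-composition by $R:U\mapsto UU^{\top}$ is a continuous surjection $C(X,\rrflex{n\times k})\to\sqrt{C}(X,P_{k,n})$, so it carries dense sets to dense sets. Your route sidesteps all of this by working with one fixed $f$ at a time and replacing the abstract continuity of $R_{\star}$ with the explicit local Lipschitz estimate $\|AA^{\top}-A'A'^{\top}\|\le C(2B+1)\|A-A'\|$, which is arguably cleaner and entirely self-contained. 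The only small point to tidy is that Kidger--Lyons is usually stated for functions defined on all of $\rrflex{d_{\mathrm{in}}}$; you should either extend $F=g^{-1}\circ\tilde f\circ h^{-1}$ from the compact set $h(X)$ to $\rrflex{n(n+1)/2}$ by Tietze before invoking it, or remark that approximation on compacts of globally defined continuous functions is equivalent (via Tietze) to approximation of continuous functions on compacts. The paper handles this same issue through the cited Proposition~3.7.
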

Theorems~\ref{theorem_univ_approx} and~\ref{theorem_UAT_AlgebroGeo}  imply that $\NN[\rho, h]$ can approximate $f^{\star}$ with arbitrarily high probability.  
\begin{corollary}\label{cor_learnability}
	Fix a Borel probability measure $\pp$ on $P_{n}$, $0<\varepsilon\leq 1$, and $\sigma$ satisfying~\ref{cond_KL}.  Then, there exists some  $\hat{f}\in \NN[
	%	2^{-1}n(n+1),kn
	g,h
	]$ of width at-most $\frac{n(n+2k+1)+4}{2}$ such that
	\begin{equation}
\max_{M \in K_{\varepsilon}}\,
	\left\|
	f^{\star}\left(M\right) - 
	\hat{f}(M)
	%	\rho\circ \hat{f}\circ h\left(M\right)
	\right\|_{\ell_1}
	<\varepsilon,	\label{eq_lem_uni_condition_target}
	\end{equation}
	where $K_\varepsilon$ was defined in Theorem \ref{theorem_univ_approx}.
\end{corollary}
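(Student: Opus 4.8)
The plan is to chain together the two results established earlier in the excerpt, observing that the corollary is essentially their composition. First I would invoke Theorem~\ref{theorem_univ_approx} with the given probability measure $\pp$ and the given $0<\varepsilon\leq 1$: parts (ii) and (iii) produce a Borel-measurable $f:P_n\to\rrflex{n\times k}$ attaining the argmin of $\|M-UU^\top\|_{\ell_1}$ pointwise, together with a compact set $K_\varepsilon\subseteq P_n$ with $\pp(K_\varepsilon)\geq 1-\varepsilon$ on which $f$ is continuous. Restricting to $K_\varepsilon$ and squaring, the map $f^\star:K_\varepsilon\ni M\mapsto f(M)f(M)^\top\in P_{k,n}$ from~\eqref{eq_target_function_optimal_decomposer} is continuous, and by construction it witnesses its own Cholesky factor (the continuous restriction $f|_{K_\varepsilon}$), so $f^\star\in\sqrt{C}(K_\varepsilon,P_{k,n})$.

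Next I would apply Theorem~\ref{theorem_UAT_AlgebroGeo} with the compact set $X:=K_\varepsilon$, the activation $\sigma$ (which satisfies Condition~\ref{cond_KL} by hypothesis), the function $f^\star\in\sqrt{C}(K_\varepsilon,P_{k,n})$, and the same tolerance $\varepsilon>0$. This yields a deep neural model $\hat{f}\in\NN[g,h]$ of width at most $\frac{n(n+2k+1)+4}{2}$ with
\[
\max_{M\in K_\varepsilon}\,\bigl\|f^\star(M)-\hat{f}(M)\bigr\|_{\ell_1}<\varepsilon,
\]
which is precisely~\eqref{eq_lem_uni_condition_target}. The width bound and the membership in $\NN[g,h]$ carry over verbatim from Theorem~\ref{theorem_UAT_AlgebroGeo}, so nothing further needs to be checked.

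The only genuinely delicate point, and the step I would treat most carefully, is verifying that $f^\star$ (as opposed to merely some element of $\sqrt{C}(K_\varepsilon,P_{k,n})$) is legitimately an input to Theorem~\ref{theorem_UAT_AlgebroGeo}: one must confirm that the continuous square-root factor required by the definition of $\sqrt{C}$ is supplied by Theorem~\ref{theorem_univ_approx}(iii), namely that $f|_{K_\varepsilon}$ is continuous and that $f^\star = \bigl(f|_{K_\varepsilon}\bigr)\bigl(f|_{K_\varepsilon}\bigr)^\top$ with values in $P_{k,n}$ — the latter because each $f(M)f(M)^\top$ has rank at most $k$ by Theorem~\ref{theorem_univ_approx}(i). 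Once this identification is in place, the rest is a direct quotation of Theorem~\ref{theorem_UAT_AlgebroGeo}, and the proof is complete. There is a minor bookkeeping subtlety in that Theorem~\ref{theorem_univ_approx} requires $\varepsilon$ to satisfy Condition~\ref{cond_KL}'s companion hypothesis (labelled \ref{cond_KL} in the theorem statement); I would note that the hypotheses of the corollary already include $\sigma$ satisfying~\ref{cond_KL}, so the same $\varepsilon$ may be used throughout, and no shrinking of $\varepsilon$ or intersecting of compact sets is needed since a single $K_\varepsilon$ serves both roles.
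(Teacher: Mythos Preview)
Your proposal is correct and follows essentially the same route as the paper: invoke Theorem~\ref{theorem_univ_approx} to obtain $K_\varepsilon$ and the continuity of $f^\star$ on it, then feed $f^\star$ into Theorem~\ref{theorem_UAT_AlgebroGeo} with $X=K_\varepsilon$. Your explicit verification that $f^\star\in\sqrt{C}(K_\varepsilon,P_{k,n})$ via the continuous factor $f|_{K_\varepsilon}$ is more careful than the paper's two-line proof, which simply cites continuity and compactness; the final paragraph's worry about the cross-reference to~\ref{cond_KL} in Theorem~\ref{theorem_univ_approx} is a typographical artifact of the paper and can safely be dropped.
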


\subsection{Convergence of Denise to a Solution Operator of the Supervised Learning Problem}\label{sec:Convergence of Denise to a Solution Operator of the Supervised Learning Problem}

We show that, under the assumption that Denise has identified the optimal weights minimizing the supervised loss function, it converges to an optimal solution $f^{\star}$ of Theorem~\ref{theorem_univ_approx} (iii).  This convergence is shown both in terms of the theoretical loss \eqref{equ:supervised loss function} and using its Monte Carlo approximation \eqref{equ:MC approx loss func}.  
We therefore operate under the following assumptions.
\begin{assumption}\label{assumption_measurability}
We assume to have a compact subset $X \subset P_n$ of matrices $M$ such that a continuous function $f : X \to \R^{n \times k}$ satisfying 
\begin{equation*}
f(M) \in \underset{U \in \rrflex{n \times k}}{\operatorname{argmin}}\, \|M - U U^T\|_{\ell_1} 
\end{equation*}
for all $M \in X$ exists.
Moreover, we assume that for $f^\star(M) := f(M)f(M)^\top$, the training set is given by 
\begin{equation*}
\mathcal{Z} := \{ (M, L) \, | \, M \in X, L = f^\star(M) \}
\end{equation*}
 and that we consider a probability measure $\P$ such that $\P (\mathcal{Z})=1$.
\end{assumption}
                
By Theorem \ref{theorem_univ_approx}, we know that such a set $X$ exists.
For any  $D \in \mathbb{N}$ let $\mathcal{N}_{\rho, h}^{\sigma, D} \subset \NN[\rho, h]$ be the set of neural networks of depth at most $D$ and let $\Theta_D$ be the set of all admissible weights for such neural networks.

\begin{theorem}\label{thm:convergence to optimal decomp}
Assume Assumption~\ref{assumption_measurability} holds and let $f^\star$ be as in there.  If for every fixed depth $D$, the weights $\theta_D$ of  $\hat f_{\theta_D}  \in \mathcal{N}_{\rho, h}^{\sigma, D}$   are chosen such that $\Phi_s(\theta_D)$ is minimal, then 
$  \| \hat f_{\theta_D}  - f^\star \|_{\ell_1}$ converges to $0$ in mean ($L^1$-norm) as $D$ tends to infinity.
\end{theorem}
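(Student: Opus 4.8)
The plan is to combine the universal approximation result of Corollary~\ref{cor_learnability} (more precisely, its uniform-on-compacta version, Theorem~\ref{theorem_UAT_AlgebroGeo}) with a monotonicity argument over the depth parameter $D$, and then pass from uniform approximation on $X$ to convergence in $L^1(\P)$ via Assumption~\ref{assumption_measurability}. Concretely, for fixed depth $D$, write $m_D := \inf_{\theta_D \in \Theta_D} \Phi_s(\theta_D)$, and note that $\theta_D$ is chosen to attain (or get arbitrarily close to) this infimum. The first step is to observe that the function classes are nested: a depth-$D$ network can be realized as a depth-$(D+1)$ network (e.g.\ by inserting an affine layer that the activation can represent locally, using the non-affine/differentiable hypothesis on $\sigma$ from Condition~\ref{cond_KL}), so $\mathcal{N}_{\rho,h}^{\sigma,D} \subseteq \mathcal{N}_{\rho,h}^{\sigma,D+1}$ and hence $m_D$ is non-increasing in $D$; therefore $m_D \to m_\infty := \inf_D m_D \geq 0$.

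The second step identifies $m_\infty$ with $0$. By Assumption~\ref{assumption_measurability} there is a continuous $f : X \to \R^{n\times k}$ with $f(M) \in \operatorname{argmin}_U \|M - UU^\top\|_{\ell_1}$, so $f^\star = f f^\top \in \sqrt{C}(X, P_{k,n})$. Fix $\varepsilon>0$. By Theorem~\ref{theorem_UAT_AlgebroGeo} there is $\hat f \in \NN[g,h]$ (of bounded width, hence of some finite depth $D_\varepsilon$) with $\max_{M\in X}\|f^\star(M) - \hat f(M)\|_{\ell_1} < \varepsilon$. Since $\P(\mathcal{Z}) = 1$ and $L = f^\star(M)$ $\P$-a.s., for this network
\begin{equation*}
\Phi_s(\theta_{D_\varepsilon}) \leq \E_{(M,L)\sim \P}\big[\|f^\star(M) - \hat f(M)\|_{\ell_1}\big] < \varepsilon,
\end{equation*}
so $m_{D_\varepsilon} < \varepsilon$, and letting $\varepsilon \downarrow 0$ gives $m_\infty = 0$. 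Consequently $\Phi_s(\theta_D) \to 0$, which by definition of $\Phi_s$ (recall $L = f^\star(M)$ a.s.) is exactly $\E\big[\|\hat f_{\theta_D}(M) - f^\star(M)\|_{\ell_1}\big] \to 0$, i.e.\ $\|\hat f_{\theta_D} - f^\star\|_{\ell_1} \to 0$ in mean.

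The main obstacle I anticipate is the nestedness claim $\mathcal{N}_{\rho,h}^{\sigma,D} \subseteq \mathcal{N}_{\rho,h}^{\sigma,D+1}$, which is needed to guarantee that $m_D$ is monotone and hence that a single convergent sequence is produced rather than a subsequence; under a general activation $\sigma$ satisfying only Condition~\ref{cond_KL} one cannot exactly represent the identity, so one should instead argue that the infimum over depth-$(D+1)$ networks is at most the infimum over depth-$D$ networks up to an arbitrarily small error (using that $\sigma$ is differentiable with nonzero derivative at some point to approximate the identity on a compact set after rescaling), which suffices for $\liminf_D m_D \le m_\infty' $ for the relevant limiting quantity. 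A secondary, more minor point is to confirm that the width bound $\tfrac{n(n+2k+1)+4}{2}$ in Theorem~\ref{theorem_UAT_AlgebroGeo} is compatible with "depth at most $D$" networks for $D$ large — i.e.\ that increasing depth (rather than width) is what drives the approximation; if the width in Theorem~\ref{theorem_UAT_AlgebroGeo} is fixed and only depth grows, then the families $\mathcal{N}_{\rho,h}^{\sigma,D}$ indexed by depth already exhaust the approximating family used there, and the argument goes through cleanly. Everything else — measurability, the a.s.\ identity $L = f^\star(M)$ turning the sup-norm bound into an $L^1$ bound, and nonnegativity of $\Phi_s$ forcing the limit to be $0$ — is routine.
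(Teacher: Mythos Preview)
Your argument is essentially the paper's proof: use Theorem~\ref{theorem_UAT_AlgebroGeo} (via Assumption~\ref{assumption_measurability}) to find, for each $\varepsilon>0$, a network $\hat f$ of some finite depth with uniform error $<\varepsilon$ on $X$, then compare with the minimizer $\theta_D$ to get $\Phi_s(\theta_D)\le\varepsilon$ for all sufficiently large $D$. Both obstacles you anticipate dissolve once you reread the definition preceding the theorem: $\mathcal{N}_{\rho,h}^{\sigma,D}$ consists of networks of depth \emph{at most} $D$, so the inclusion $\mathcal{N}_{\rho,h}^{\sigma,D}\subseteq\mathcal{N}_{\rho,h}^{\sigma,D+1}$ is immediate (no identity-layer approximation needed), and any approximant $\hat f\in\NN[\rho,h]$ produced by Theorem~\ref{theorem_UAT_AlgebroGeo} automatically lies in $\mathcal{N}_{\rho,h}^{\sigma,D_\varepsilon}$ for its own depth $D_\varepsilon$.
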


In the following, we assume the size of the neural network $D$ is fixed and we study the convergence of the Monte Carlo approximation with respect to the number of samples $N$. 
Moreover, we show that both types of convergence can be combined.
To do so, we define $\tilde\Theta_D := \{ \theta \in \Theta_D \; \vert \; \lvert \theta \rvert_2 \leq D \}$, which is a compact subspace of $\Theta_D$. It is straight forward to see that $\Theta_D$ in Theorem \ref{thm:convergence to optimal decomp} can be replaced by $\tilde\Theta_D$. Indeed, if the needed neural network weights for an $\varepsilon$-approximation have too large norm, then one can increase $D$ until it is sufficiently big. 

\begin{theorem}
\label{thm:MC convergence}
Assume Assumption~\ref{assumption_measurability} holds and let $f^\star$ be as in there.
For every $D \in \N$, $\P$-a.s. 
\begin{equation*}
\hat\Phi^N_s \xrightarrow{N \to \infty} \Phi_s \quad \text{uniformly on } \tilde\Theta_D.
\end{equation*}
Let the size of the neural network $D$ be fixed and let $\theta_D$ be as in Theorem~\ref{thm:convergence to optimal decomp}.
If  for every fixed size $N$ of the training set, the weights $\theta_{D,N} \in \tilde \Theta_D$ are chosen such that $\hat \Phi_s^N(\theta_{D,N})$ is
minimal, then
\begin{equation*}
\Phi_s(\theta_{D,N}) \xrightarrow{N \to \infty} \Phi_s(\theta_{D}). 
\end{equation*}
In particular, one can define an increasing sequence $(N_D)_{D \in \N}$ in $\N$ such that $  \| \hat f_{\theta_{D,N}}  - f^\star \|_{\ell_1}$ converges to $0$ in mean ($L^1$-norm) as $D$ tends to infinity.
\end{theorem}

\subsection{Convergence of Denise in the Unsupervised Learning Problem}
Finally, we present the analogous results to Theorems~\ref{thm:convergence to optimal decomp} and~\ref{thm:MC convergence} in the unsupervised setting.  The primary distinction between the supervised and unsupervised settings is that Denise is only guaranteed to converge to a minimum of the unsupervised loss function~\eqref{eq:unsupervised_loss}.

\begin{assumption}\label{ass:unsupervised training set}
    We assume to have a compact subset $\tilde X \subset P_n$ of matrices $M$ together with a probability measure $\tilde \P$ on the set 
    \begin{equation}
        \tilde{\mathcal{Z}} := \{ (M, L) \, | \, M \in \tilde X, L=0 \}
    \end{equation}
    that satisfies $\tilde \P(\tilde{\mathcal{Z}}) = 1$.
\end{assumption}

In the unsupervised learning task we cannot guarantee that Denise converges to any specific target function as we did in Section~\ref{sec:Convergence of Denise to a Solution Operator of the Supervised Learning Problem}. However, we can still show that its output converges to a minimum in terms of the loss function. Therefore, let us define the minimum
\begin{equation}
    \Phi_{\min} := \inf_{f \in \sqrt{C}(\tilde{X},P_{k,n})} \E_{(M,L) \sim \Tilde{\P}} [ \lVert M - f(M) \rVert_{\ell_1}],
\end{equation}
for which the following result holds.

\begin{theorem}\label{thm:convergence theoretical unsupervise loss}
    Under Assumption~\ref{ass:unsupervised training set},
    if for every fixed depth $D$, the weights $\theta_D$ of  $\hat f_{\theta_D}  \in \mathcal{N}_{\rho, h}^{\sigma, D}$   are chosen such that $\Phi_u(\theta_D)$ is minimal, then  $\Phi_u(\theta_D)$ converges to the minimum $\Phi_{\min}$.
\end{theorem}

Similarly, as in Section~\ref{sec:Convergence of Denise to a Solution Operator of the Supervised Learning Problem}, we can also show the convergence of the Monte Carlo approximation in the unsupervised setting.

\begin{theorem}
\label{thm:MC convergence unsupervised}
Under Assumption~\ref{ass:unsupervised training set},
for every $D \in \N$, $\P$-a.s. 
\begin{equation*}
\hat\Phi^N_u \xrightarrow{N \to \infty} \Phi_u \quad \text{uniformly on } \tilde\Theta_D.
\end{equation*}
Let the size of the neural network $D$ be fixed and let $\theta_D$ be as in Theorem~\ref{thm:convergence theoretical unsupervise loss}.
If  for every fixed size $N$ of the training set, the weights $\theta_{D,N} \in \tilde \Theta_D$ are chosen such that $\hat \Phi_u^N(\theta_{D,N})$ is
minimal, then
\begin{equation*}
\Phi_u(\theta_{D,N}) \xrightarrow{N \to \infty} \Phi_u(\theta_{D}). 
\end{equation*}
In particular, one can define an increasing sequence $(N_D)_{D \in \N}$ in $\N$ such that $\hat \Phi_u^N(\theta_{D,N_D})$ converges to $\Phi_{\min}$ as $D$ tends to infinity.
\end{theorem}

\section{Numerical Results}\label{sec:numerical results}
In this sections we provide numerical results of Denise. We first train Denise with the supervised loss function on a synthetic training dataset and evaluate it on a synthetic test dataset. We also evaluate Denise on a synthetic test dataset which is  generated with a different distribution. Finally, we test Denise on a real word dataset before and after finetuning with the unsupervised loss function. The source code is avaible at 
\if\addackn0
\url{https://github.com/XXXXX}\, (true link not shown to keep anonymous).
\else
\url{https://github.com/DeepRPCA/Denise}\,.
\fi

\subsection{Supervised Training}
\label{Training}
We create a synthetic dataset in order to train Denise using the Monte Carlo approximation \eqref{equ:MC approx loss func} of the supervised loss function \eqref{equ:supervised loss function}. In particular, we construct a collection of $n$-by-$n$ symmetric positive semidefinite matrices $M$ that can be written as 
\begin{equation}
M = L_0+S_0
\end{equation}
for a known matrix $L_0$ of rank $k_0\leq n$ and a known matrix $S_0$ of given sparsity $s_0$. By sparsity we mean the number of zero-valued elements divided by the total number of elements. For example, a sparsity of $0.95$ means that $95\%$ of the elements of the matrix are zeros.

To construct a symmetric low rank matrix $L_0$, we first sample $nk_0$ independent standard normal random  variables that we arrange into an $n$-by-$k_0$ matrix $U$. Then $L_0$ is defined as $UU^T$.

To construct a symmetric positive semidefinite sparse matrix $S_0$ we first sample a random pair $(i,j)$ with $1\leq i<j\leq n$ from an uniform distribution. We then construct an $n$-by-$n$ matrix $\tilde{S}_0$ that has only four non-zero coefficients: 
the off-diagonal elements $(i,j)$ and $(j,i)$ are set to a number $b$ drawn uniformly randomly in $[-1,1]$,
the diagonal elements $(i,i)$ and $(j,j)$ are set to a number $a$ drawn uniformly randomly in $[|b|,1]$. 
An example of a $3\times 3$ matrix with $(i,j) = (1,2)$, $b=-0.2$ and $a=0.3$ is the following:
\begin{equation*}
\tilde{S}_0 = 
\left(\begin{matrix}
0.3 & -0.2 & 0  \\
-0.2 & 0.3 & 0  \\
0 & 0 & 0 
\end{matrix}\right)\,.
\end{equation*}
This way, the matrix $\tilde{S}_0$ is positive semidefinite. The matrix $S_0$ is obtained by summing different realizations $\tilde{S}_0^{(l)}$, each corresponding to a different pair $(i,j)$, until the desired sparsity is reached.

With this method, we create a synthetic dataset consisting of 10 million matrices for the training set. Other possibilities to generate the training set exist. For example, other distributions or different levels of sparsity can be used. Diversifying the training set can lead to better performance of the trained algorithm. 

To implement Denise, we used the machine learning framework Tensorflow \citep{tensorflow2015whitepaper} with Keras APIs \citep{chollet2015keras}. 
We have tested several neural network architectures, and settled on a simple feed-forward neural network of four layers, with a total of $32\times n(n+1)/2$ parameters. 
Moreover, we have tested various sizes, sparsities and ranks for the samples of the training set. All results were similar, hence we only present those using  size $n=20$,  sparsity $s_0=0.95$ and rank $k_0=3$ in the training set. 
In this setting, we trained our model using 16 Google Cloud TPU-v2 hardware accelerators. Training took around 8 hours (90 epochs), at which point loss improvements were negligible.

\subsubsection{Evaluation}

We create a synthetic test dataset consisting of 10,000 matrices for each of the test settings, using the method presented in Section~\ref{Training}. The synthetic dataset introduced in Section~\ref{Training} is composed of randomly generated low rank plus sparse matrices of a certain rank and sparsity. Therefore, a network which performs well on this random test set should also perform well on a real world datasets with the same rank and similar sparsity. 
The code to generate the synthetic dataset is deterministic by setting a fixed random seed.

We compare Denise against PCP \citep{Candes:2011}, IALM \citep{NIPS2011_4434}, FPCP \citep{Rodriguez2013} and RPCA-GD \citep{NIPS2016_6445}. All algorithms are implemented as part of the LRS matlab library \citep{lrslibrary2015, Bouwmans:2016:HRL:2994445}. Evaluation of all the algorithms was done on the same computer\footnote{A machine with $2\times$Intel Xeon CPU E5-2697 v2 (12 Cores) 2.70GHz and 256 GiB of RAM.} for a fair comparison of the inference time.

We compare the rank of the low rank matrix $L$ and the sparsity of the sparse matrix $S$. 
We determine the \emph{approximated rank} $r(L)$ by the number of eigenvalues of the low-rank $L$ that are larger than $\varepsilon = 0.01$. Similarly, we determine the \emph{approximated sparsity} $s(L)$ by proportion of the entries of the sparse matrix $S$ which are smaller than $\varepsilon = 0.01$ in absolute value. 

Moreover, we compare the relative error between the computed low rank matrix $L$ and the low rank matrix $L_0$ (i.e. the low-rank matrix from the synthetic train and test dataset), by rel.error$(L,L_0) =||L-L_0||_F /||L_0||_F$. Similarly, we compare the relative error between the computed sparse matrix $S$ and the sparse matrix $S_0$, by rel.error$(S,S_0) =||S-S_0||_F /||S_0||_F$.

To enable a fair comparison between the algorithms, we first ensure that the obtained low-rank matrices $L$ all have the same rank. While in FPCP, RPCA-GD and Denise the required rank is set, in PCP and IALM the required rank is depending on the parameter $\lambda$. Therefore, we  empirically determined $\lambda$ in order to reach the same rank. In particular, with $\lambda = 0.56/\sqrt{n}$ for the synthetic dataset and $\lambda = 0.64 /\sqrt{n}$ for the real dataset, we approximately obtain a rank of $3$ for matrices $L$. 

In Table \ref{table:synthetic normal}, we evaluate Denise (trained on the training set with sparsity $s_0=0.95$) in 5 test settings with different sparsity $s_0 \in \{ 0.6, 0.7, 0.8, 0.9, 0.95 \}$.
Overall Denise obtains comparable results to the state-of-the-art algorithms, while significantly outperforming the other algorithms in terms of inference speed once it is trained. 
This is due to the fact that only one forward pass through the neural network of Denise is needed during evaluation to compute the decomposition. In contrast to this very fast operation, the state-of-the-art algorithms need to solve an iterative optimization algorithm for each new matrix.

\begin{table}[h]

\caption{Comparison between Denise and state of the art algorithms where $L$ is sampled from a standard normal distribution.
   For different given sparsity $s(S_0)$ of $S_0$, the output properties are the
	actual rank $r(L)$ of the returned matrix $L$, the sparsity $s(S)$ of the
	returned  matrix $S$ as well as the relative errors rel.error$(L)$ and
	rel.error$(S)$.
 Additionally we report the training (only applicable for Denise) and inference time. Results are reported as \emph{mean (std)} computed over all samples of the test sets.
}

\begin{center}
%\resizebox{\columnwidth}{!}{
\begin{tabular}{| c c| c c |c  c |c c|}
\toprule
     &        &       $r(L)$ &       $s(S)$ & rel.error(L) & rel.error(S) &       \multicolumn{2}{c |}{time} \\
$s(S_0)$ & Algo &              &              &              &              & train (h) & inference (ms)                \\
\midrule
\multirow{5}{*}{0.60} & PCP &  2.94 (0.23) &  0.17 (0.02) &  0.51 (0.10) &  2.45 (0.58) & -- & 73.52 (21.13) \\
     & IALM &  2.92 (0.27) &  0.09 (0.02) &  0.64 (0.09) &  3.10 (0.67) & -- &  27.88 (2.45) \\
     & FPCP &  3.00 (0.00) &  0.02 (0.01) &  0.48 (0.08) &  2.32 (0.61) & -- &  16.55 (4.11) \\
     & RPCA-GD &  3.00 (0.00) &  0.02 (0.01) &  0.41 (0.17) &  1.97 (0.93) & -- & 59.30 (17.52) \\
     & \cellcolor{light-gray}Denise & \cellcolor{light-gray} 3.00 (0.00) & \cellcolor{light-gray} 0.02 (0.01) & \cellcolor{light-gray} 0.46 (0.16) & \cellcolor{light-gray} 2.17 (0.74) & \cellcolor{light-gray}   0  & \cellcolor{light-gray}    0.05 (0.00) \\
\cline{1-8}
\multirow{5}{*}{0.70} & PCP &  2.98 (0.13) &  0.19 (0.02) &  0.48 (0.10) &  2.63 (0.67) & -- & 92.51 (25.79) \\
     & IALM &  2.96 (0.19) &  0.10 (0.02) &  0.63 (0.09) &  3.46 (0.77) & -- &  30.57 (2.79) \\
     & FPCP &  3.00 (0.00) &  0.03 (0.01) &  0.48 (0.08) &  2.63 (0.70) & -- &  10.15 (3.77) \\
     & RPCA-GD &  3.00 (0.00) &  0.03 (0.02) &  0.39 (0.18) &  2.18 (1.10) & -- & 57.45 (17.52) \\
     & \cellcolor{light-gray}Denise & \cellcolor{light-gray} 3.00 (0.00) & \cellcolor{light-gray} 0.02 (0.01) & \cellcolor{light-gray} 0.42 (0.15) & \cellcolor{light-gray} 2.26 (0.82) & \cellcolor{light-gray}   0  & \cellcolor{light-gray}    0.05 (0.00) \\
\cline{1-8}
\multirow{5}{*}{0.80} & PCP &  3.00 (0.06) &  0.22 (0.03) &  0.45 (0.10) &  2.93 (0.83) & -- & 98.25 (27.72) \\
     & IALM &  2.99 (0.11) &  0.11 (0.02) &  0.62 (0.09) &  4.06 (0.95) & -- &  29.47 (2.46) \\
     & FPCP &  3.00 (0.00) &  0.03 (0.02) &  0.47 (0.08) &  3.11 (0.86) & -- &  10.11 (4.42) \\
     & RPCA-GD &  3.00 (0.00) &  0.04 (0.03) &  0.38 (0.19) &  2.54 (1.39) & -- & 48.03 (14.37) \\
     & \cellcolor{light-gray}Denise & \cellcolor{light-gray} 3.00 (0.00) & \cellcolor{light-gray} 0.02 (0.01) & \cellcolor{light-gray} 0.37 (0.14) & \cellcolor{light-gray} 2.38 (0.95) & \cellcolor{light-gray}  0   & \cellcolor{light-gray}    0.05 (0.00) \\
\cline{1-8}
\multirow{5}{*}{0.90} & PCP &  3.00 (0.06) &  0.27 (0.05) &  0.41 (0.11) &  3.65 (1.18) & -- & 122.84 (29.65) \\
     & IALM &  3.00 (0.08) &  0.12 (0.02) &  0.61 (0.10) &  5.39 (1.33) & -- &  30.47 (2.86) \\
     & FPCP &  3.00 (0.00) &  0.04 (0.02) &  0.47 (0.08) &  4.19 (1.20) & -- &  16.43 (4.08) \\
     & RPCA-GD &  3.00 (0.00) &  0.09 (0.11) &  0.36 (0.21) &  3.26 (2.01) & -- & 60.59 (17.04) \\
     & \cellcolor{light-gray}Denise & \cellcolor{light-gray} 3.00 (0.00) & \cellcolor{light-gray} 0.03 (0.01) & \cellcolor{light-gray} 0.30 (0.13) & \cellcolor{light-gray} 2.61 (1.24) & \cellcolor{light-gray}    0  & \cellcolor{light-gray}    0.05 (0.00) \\
\cline{1-8}
\multirow{5}{*}{0.95} & PCP &  3.02 (0.13) &  0.30 (0.07) &  0.39 (0.11) &  4.84 (1.75) & -- & 124.99 (31.56) \\
     & IALM &  3.00 (0.08) &  0.13 (0.03) &  0.60 (0.10) &  7.39 (2.02) &  -- &  29.67 (2.33) \\
     & FPCP &  3.00 (0.00) &  0.05 (0.02) &  0.47 (0.08) &  5.77 (1.77) & -- &  16.94 (3.58) \\
     & RPCA-GD &  3.00 (0.00) &  0.17 (0.24) &  0.34 (0.22) &  4.28 (2.96) & -- & 49.67 (13.86) \\
     & \cellcolor{light-gray}Denise & \cellcolor{light-gray} 3.00 (0.00) & \cellcolor{light-gray} 0.03 (0.02) & \cellcolor{light-gray} 0.26 (0.13) & \cellcolor{light-gray} 3.12 (1.66) & \cellcolor{light-gray}    8  & \cellcolor{light-gray}    0.05 (0.00) \\
\bottomrule
\end{tabular}

%}
\end{center}
\label{table:synthetic normal}
\end{table}

\subsubsection{Evaluation on Differently Generated Synthetic Data} 
We additionally create 5 synthetic test sets with different sparsity consisting of 10,000 matrices each, using the  method presented in Section~\ref{Training} but with a different distribution. In particular, the low-rank matrices are generated using the Student's $t$-distribution (with parameter $k=5$) instead of using the standard normal distribution. Also in this example, Denise (trained on the original training set with normal distribution and sparsity $s_0=0.95$) achieves similar results, while being nearly instantaneous (Table \ref{table:synthetic t-dist}).

\begin{table}[h]

\caption{Comparison between Denise and state of the art algorithms, where $L $ is sampled from a $t$-distribution.
   For different given sparsity $s(S_0)$ of $S_0$, the output properties are the
	actual rank $r(L)$ of the returned matrix $L$, the sparsity $s(S)$ of the
	returned  matrix $S$ as well as the relative errors rel.error$(L)$ and
	rel.error$(S)$.
 Since Denise was not trained in any of the tested settings, we only report the inference time. Results are reported as \emph{mean (std)} computed over all samples of the test set. 
}

\begin{center}
%\resizebox{\columnwidth}{!}{
\begin{tabular}{| c c| c c |c  c |c|}
\toprule
     &        &       $r(L)$ &       $s(S)$ & rel.error(L) &  rel.error(S) &       time (ms) \\
$s(S_0)$ & Algo &              &              &              &               &                 \\
\midrule
\multirow{5}{*}{0.60} & PCP &  2.97 (0.18) &  0.18 (0.02) &  0.60 (0.13) &   5.16 (3.27) &   78.81 (22.54) \\
     & IALM &  2.95 (0.22) &  0.09 (0.02) &  0.71 (0.10) &   6.04 (3.25) &    30.52 (2.23) \\
     & FPCP &  3.00 (0.03) &  0.02 (0.01) &  0.48 (0.11) &   3.89 (1.39) &    11.02 (5.02) \\
     & RPCA-GD &  3.00 (0.00) &  0.02 (0.01) &  0.51 (0.20) &   4.50 (3.41) &   48.81 (14.70) \\
     & \cellcolor{light-gray}Denise & \cellcolor{light-gray} 3.00 (0.00) & \cellcolor{light-gray} 0.01 (0.01) & \cellcolor{light-gray} 0.41 (0.20) & \cellcolor{light-gray}  3.67 (6.97) & \cellcolor{light-gray}    0.05 (0.00) \\
\cline{1-7}
\multirow{5}{*}{0.70} & PCP &  2.99 (0.11) &  0.19 (0.02) &  0.58 (0.13) &   5.73 (3.58) &   88.22 (24.77) \\
     & IALM &  2.98 (0.15) &  0.10 (0.02) &  0.70 (0.10) &   6.81 (3.55) &    29.97 (2.28) \\
     & FPCP &  3.00 (0.03) &  0.02 (0.01) &  0.48 (0.10) &   4.41 (1.53) &    16.68 (4.10) \\
     & RPCA-GD &  3.00 (0.00) &  0.02 (0.02) &  0.51 (0.20) &   5.09 (3.77) &   48.47 (14.52) \\
     & \cellcolor{light-gray}Denise & \cellcolor{light-gray} 3.00 (0.00) & \cellcolor{light-gray} 0.01 (0.01) & \cellcolor{light-gray} 0.38 (0.20) & \cellcolor{light-gray}  3.91 (8.88) & \cellcolor{light-gray}    0.05 (0.00) \\
\cline{1-7}
\multirow{5}{*}{0.80} & PCP &  3.00 (0.06) &  0.21 (0.03) &  0.56 (0.14) &   6.64 (4.18) &  106.57 (27.99) \\
     & IALM &  2.99 (0.11) &  0.10 (0.02) &  0.69 (0.10) &   8.05 (4.13) &    30.95 (2.85) \\
     & FPCP &  3.00 (0.03) &  0.03 (0.01) &  0.48 (0.11) &   5.28 (1.87) &    10.02 (3.88) \\
     & RPCA-GD &  3.00 (0.00) &  0.03 (0.03) &  0.50 (0.21) &   6.00 (4.39) &   57.42 (17.19) \\
     & \cellcolor{light-gray}Denise & \cellcolor{light-gray} 3.00 (0.00) & \cellcolor{light-gray} 0.02 (0.01) & \cellcolor{light-gray} 0.35 (0.20) & \cellcolor{light-gray} 4.33 (10.34) & \cellcolor{light-gray}    0.05 (0.00) \\
\cline{1-7}
\multirow{5}{*}{0.90} & PCP &  3.01 (0.10) &  0.24 (0.04) &  0.54 (0.14) &   8.81 (6.37) &   99.10 (26.59) \\
     & IALM &  3.00 (0.09) &  0.12 (0.02) &  0.69 (0.10) &  10.89 (6.29) &    17.78 (1.69) \\
     & FPCP &  3.00 (0.03) &  0.03 (0.02) &  0.47 (0.11) &   7.12 (2.57) &    10.68 (3.27) \\
     & RPCA-GD &  3.00 (0.00) &  0.05 (0.07) &  0.49 (0.22) &   8.09 (6.69) &   41.67 (13.21) \\
     & \cellcolor{light-gray}Denise & \cellcolor{light-gray} 3.00 (0.00) & \cellcolor{light-gray} 0.02 (0.01) & \cellcolor{light-gray} 0.31 (0.21) & \cellcolor{light-gray} 5.55 (19.75) & \cellcolor{light-gray}    0.05 (0.00) \\
\cline{1-7}
\multirow{5}{*}{0.95} & PCP &  3.03 (0.17) &  0.27 (0.06) &  0.53 (0.14) &  11.83 (8.03) &  105.88 (26.74) \\
     & IALM &  3.01 (0.12) &  0.12 (0.03) &  0.69 (0.10) &  14.83 (7.97) &    30.19 (2.18) \\
     & FPCP &  3.00 (0.02) &  0.03 (0.02) &  0.47 (0.11) &   9.79 (3.77) &    10.27 (3.75) \\
     & RPCA-GD &  3.00 (0.00) &  0.08 (0.15) &  0.48 (0.23) &  10.82 (8.53) &   50.14 (14.15) \\
     & \cellcolor{light-gray}Denise & \cellcolor{light-gray} 3.00 (0.00) & \cellcolor{light-gray} 0.02 (0.01) & \cellcolor{light-gray} 0.29 (0.20) & \cellcolor{light-gray} 6.85 (17.01) & \cellcolor{light-gray}    0.05 (0.00) \\
\bottomrule
\end{tabular}

%}
\end{center}
\label{table:synthetic t-dist}
\end{table}

Table~\ref{table:synthetic t-dist} shows that many of the benchmark robust PCA algorithms struggle to produce decompositions with a competitive level of sparsity.  This is likely because they are designed for general matrices and can have trouble with the added symmetry and positive definite structure present in this problem, for which Denise has a specialized inductive bias.  For example, the symmetric and positive definiteness of the input matrices violate the condition of \citep[Theorem 1.1]{Candes:2011} which state that the sparse part $S$ of the input matrix $M$ has uniformly distributed zero entries so that the PCP algorithm can recover the true $L+S$ decomposition of $M$.

\subsection{A Note on the Computation Time of Denise}
While applying the trained Denise algorithm is nearly instantaneous, outperforming all competitors, its training is very time intensive. Incorporating the training time into the time measurement (normalized by the test set), i.e. distributing the 8 hours of training time equally to the evaluation of the 10,000 test matrices, yields an evaluation time of 2880 ms per test sample, which is much slower than the other algorithms. Increasing the test set to the same size as the training set with 10 million samples would decrease this training-adjusted measurement time to less than 3 ms, outperforming the competitors again. Since this training-adjusted measurement time is specific to the test set size (which is chosen arbitrarily), we do not show it in the tables.
However, this consideration makes it clear that in cases where only view matrices need to be decomposed, Denise does not offer a benefit in terms of computation time over the existing methods if it needs to be trained. On the other hand, if any time improvement at inference is valuable (e.g.\ in high-frequency trading), Denise offers the possibility to do the computationally heavy part offline by training it for all needed combinations of matrix and output low-rank sizes beforehand such that, at inference, only its fast evaluation time matters. Moreover, if one has to decompose a large number of matrices or if one regularly needs to decompose matrices of a known size (as would be the case in automated video decomposition tasks e.g.\ for traffic cameras), the usage of Denise can provide a helpful speedup.
In particular, we propose to use Denise in cases where its compilation and training can be done offline, leading to a compilation artifact that can be used to speed up inference significantly similar to amortized inference \citep{gershman2014amortized} and inference compilation \citep{le2017inference, harvey2019attention}. 
 
\subsection{Application on  S\&P500 Stocks Portfolio}
\label{s:stocks_Numerics}
We consider a real world dataset of  about 1'000 $20$-by-$20$ correlation matrices of daily stock returns (on closing prices), for consecutive trading days, shifted every 5 days, between 1989 and 2019. The considered stocks belong to the S\&P500 and have been sorted by the GICS sectors\footnote{ According to the global industry classification standard:  energy ,  materials , industrials, real estate, consumer discretionary, consumer staples, health care, financials, information technology, communication services, utilities.}.  
The first $77\%$ of the data is used as training set and the remaining $23\%$ as test set.

We perform a low-rank plus sparse decomposition of these matrices where we choose the low rank to be equal to $k_0=3$ as we used it in the synthetic case before.  This choice can be made by the user depending on their preference of the number of resulting principle components. Depending on this choice of $k_0$ the supervised training needs to be done on a corresponding synthetic training set with the same rank $k_0$.

Denise, which was trained on the synthetic dataset with $k_0=3$, is once evaluated on the real world test set before and once after finetuning it on the (real world) training set (Table~\ref{realtable}).
The finetuning considerable improves the performance of Denise. Upon inspection we find that Denise offers comparable performance to the leading fastest robust PCA algorithm, namely FPCP, while executing 30$\times$ faster. The synthetic test dataset is composed of 10,000 matrices, while here the test dataset contains around $200$ matrices. This explains why the computation time of Denise is higher here, as the effort needed to launch the computations is the same no matter whether 10,000 or $200$ matrices are evaluated. 
If repeating the test set such that it has again 10,000 samples, Denise achieves the same speed as on the synthetic dataset ($0.05$ ms). In particular, Denise has the advantage of becoming (relatively) faster when applied to more samples.

\begin{table}[h]

\caption{Comparison of Denise and Denise with finetuning (FT) to the state of the art algorithms on the S\&P500 dataset's test set.
We report the finetuning (only applicable for Denise) and inference time. Results are reported as \emph{mean (std)} computed over all samples of the test set. 
}

\begin{center}
\begin{tabular}{| l |cc|c |c c|}
\toprule
Method & $r(L)$ & $s(S)$ &
$RE_{ML} = \frac{||M-L||_F}{||M||_F}$ &
FT time (s) & inference time (ms)  \\
\midrule
PCP & 2.97 (0.54) & 0.33 (0.06) & 0.15 (0.04) &  -- & 87.09 (0.02)\\
IALM & 2.89 (0.53) & 0.31 (0.06) & 0.15 (0.04) &  -- & 29.11 (0.00)\\
FPCP & 2.99 (0.13) & 0.24 (0.08) & 0.11 (0.03) &  -- & 17.91 (0.02)\\
RPCA-GD & 3.00 (0.07) & 0.19 (0.08) & 0.22 (0.05) & -- & 61.23 (0.03)\\
\hline
{Denise } & {3.00 (0.00) } & {0.08 (0.02) } & {0.18 (0.03) } & -- & {0.66 (0.00) }\\
\cellcolor{light-gray}{Denise (FT) } & \cellcolor{light-gray}{3.00 (0.00) } & \cellcolor{light-gray}{0.15 (0.04) } & \cellcolor{light-gray}{0.15 (0.04) } & 
\cellcolor{light-gray}{45 } &
\cellcolor{light-gray}{0.62 (0.00) }\\
\bottomrule
\end{tabular}
\end{center}
\label{realtable}
\end{table}

\begin{center}
	\begin{figure}[hp]
	\centering
	\begin{subfigure}[c]{0.70\textwidth}
		\begin{minipage}[c]{0.32\linewidth}%
			\centering \includegraphics[width=1\linewidth]{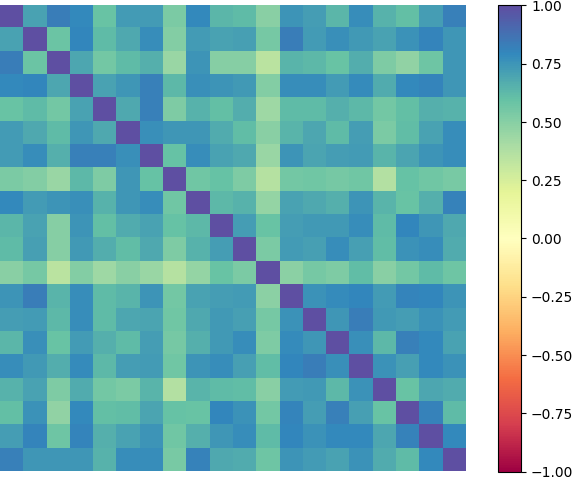}
			\vspace{-6mm}
			\caption*{{$M$}}
			\vspace{2mm}
			\includegraphics[width=1\linewidth]{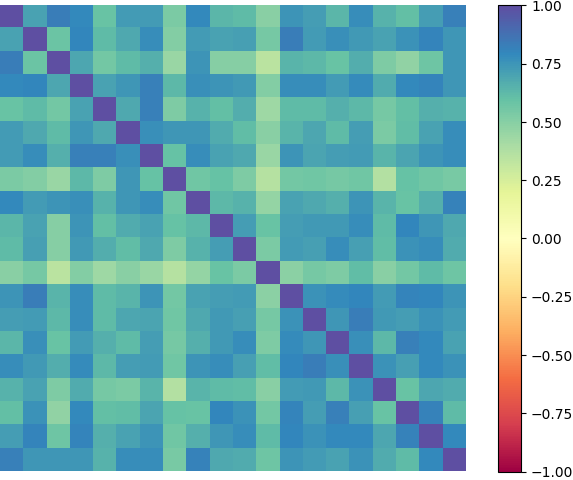}
			\vspace{-6mm}
			\caption*{{$M$}}
			\vspace{2mm}
			\includegraphics[width=1\linewidth]{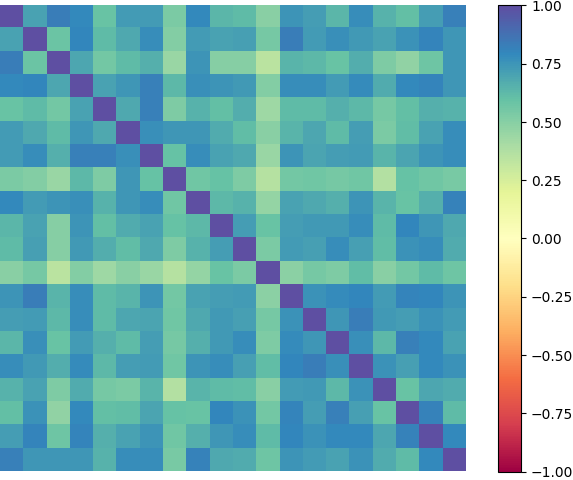} 
			\vspace{-6mm}
			\caption*{{$M$}}
			\vspace{2mm}
			\includegraphics[width=1\linewidth]{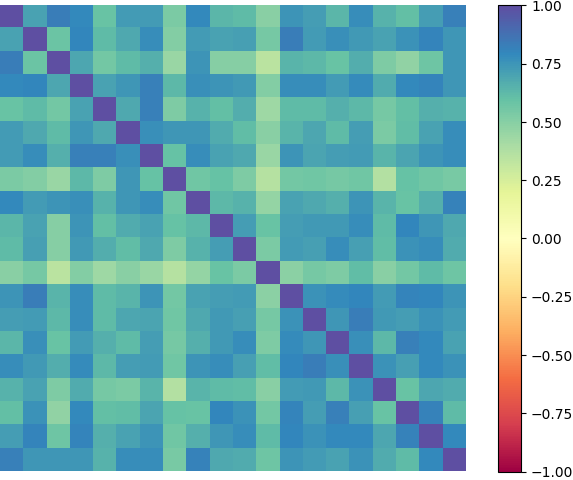} 
			\vspace{-6mm}
			\caption*{{$M$}}
			\vspace{2mm}
			\includegraphics[width=1\linewidth]{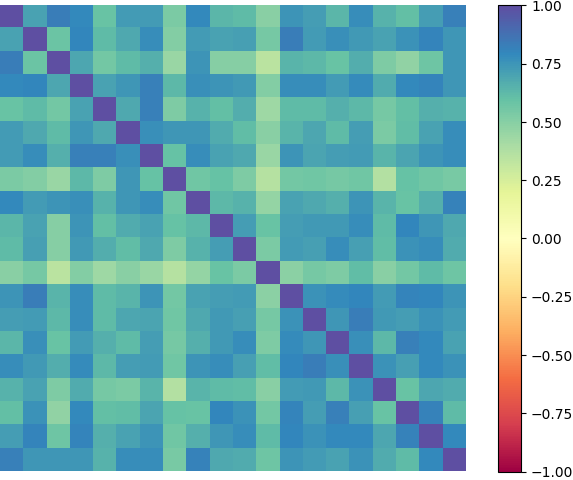}
			\vspace{-6mm}
			\caption*{{$M$}}
			\vspace{2mm}
		\end{minipage}%
		\vspace{0.5cm}
		\begin{minipage}[c]{0.32\linewidth}%
\centering \includegraphics[width=1\linewidth]{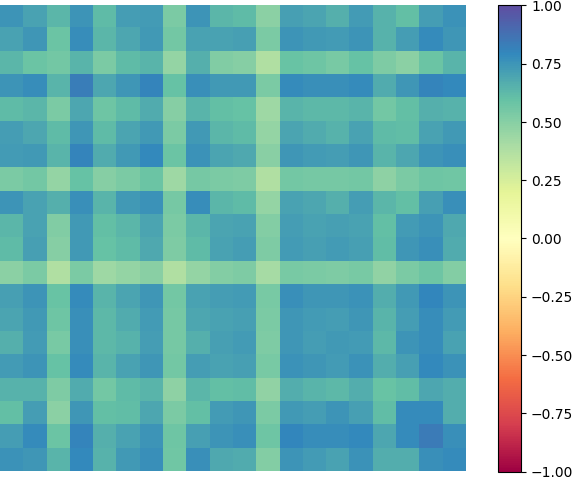}
\vspace{-6mm}
 \caption*{$L$ (PCP)}
\vspace{2mm}
 \includegraphics[width=1\linewidth]{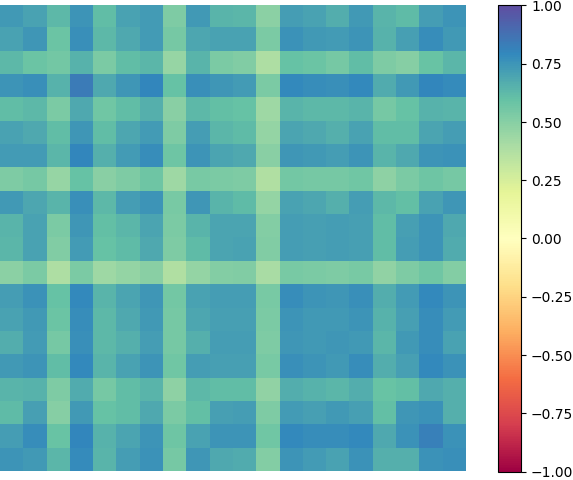} \vspace{-6mm}
 \caption*{$L$ (IALM)}
\vspace{2mm}
			\includegraphics[width=1\linewidth]{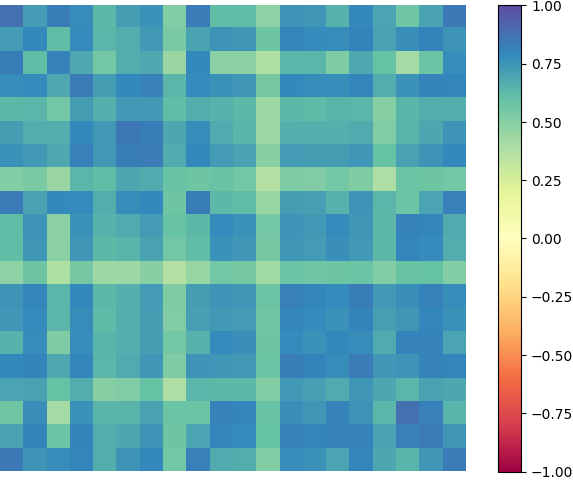} \vspace{-6mm}
			\caption*{$L$ (FPCP)}
			\vspace{2mm}
			\includegraphics[width=1\linewidth]{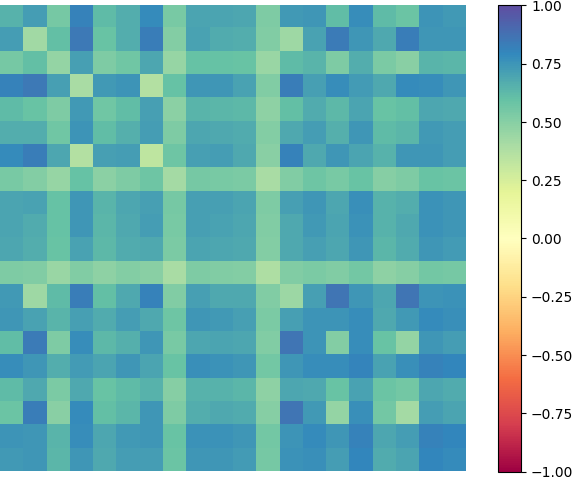} \vspace{-6mm}
			\caption*{$L$ (RPCA-GD)}
			\vspace{2mm}
			\includegraphics[width=1\linewidth]{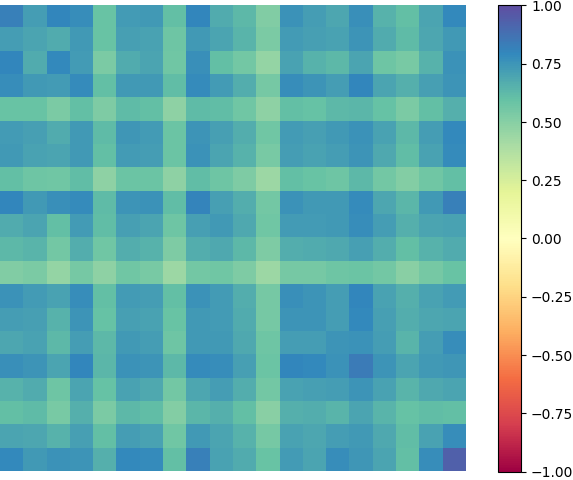} \vspace{-6mm}
			\caption*{$L$ (Denise (FT))}
			\vspace{2mm}
		\end{minipage}%
		\vspace{-0.5cm}
		\begin{minipage}[c]{0.32\linewidth}%
			\centering
%			\vspace{0.35cm}
 \includegraphics[width=1\linewidth]{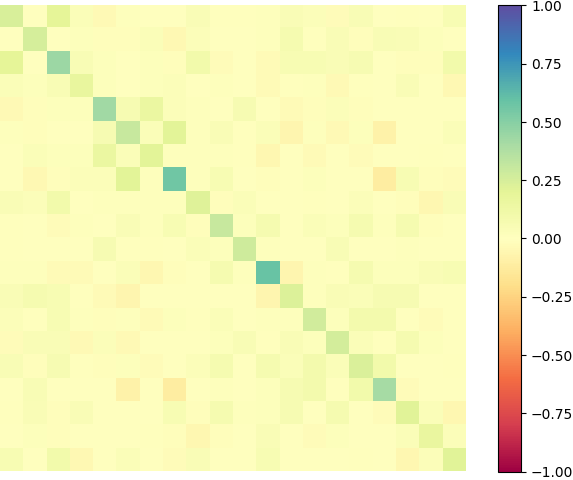}
			\vspace{-6mm}
			\caption*{$S$ (PCP)}
			\vspace{2mm}
			\includegraphics[width=1\linewidth]{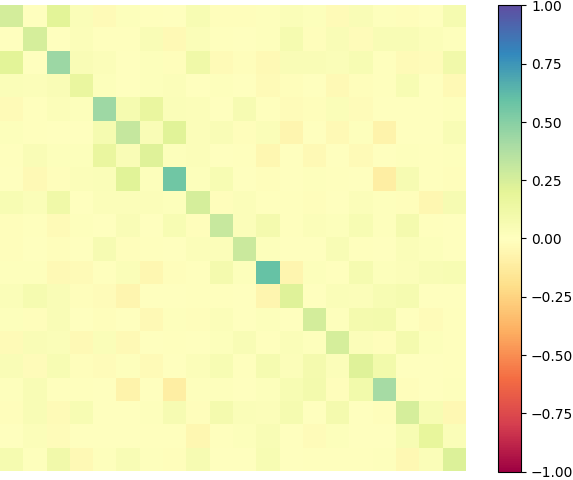} \vspace{-6mm}
			\caption*{ {$S$ (IALM)}}
			\vspace{2mm}
			\includegraphics[width=1\linewidth]{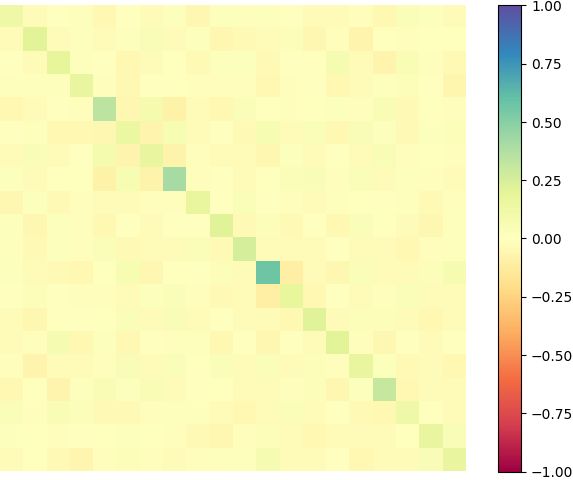} \vspace{-6mm}
			\caption*{ {$S$ (FPCP)}}
			\vspace{2mm}
			\includegraphics[width=1\linewidth]{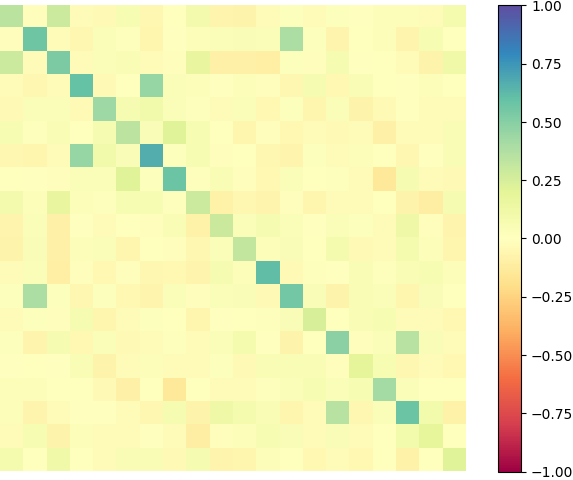} \vspace{-6mm}
			\caption*{ {$S$ (RPCA-GD)}}
			\vspace{2mm}
			\includegraphics[width=1\linewidth]{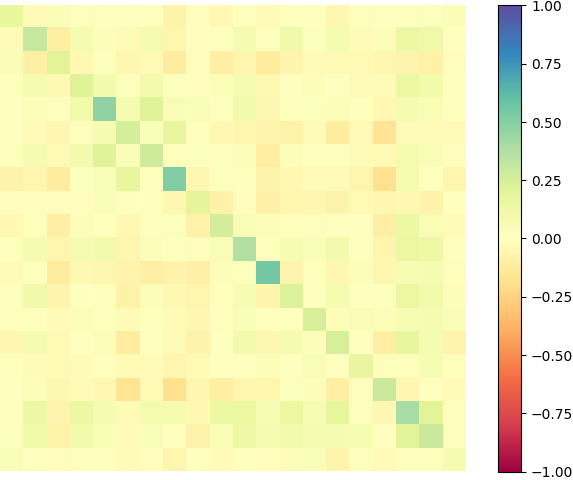} \vspace{-6mm}
			\caption*{ $S$ (Denise (FT))}
			\vspace{2mm}
		\end{minipage}%\vspace{3mm}
		 \end{subfigure}
		\caption{Decomposition into a low-rank plus a sparse matrix of the correlation matrix of a portfolio of 20 stocks among the S\&P500 stocks. The forced rank is set to $k=3$.  
		We have $||M - L||_F/||M||_F$ at $0.15$ for PCP, $0.15$ for IALM, at $0.11$ for FPCP, at $0.22$ for RPCA-GD and at $0.15$ for Denise.  
		The reconstruction metric $||M - L - S||_F/||M||_F$ is $0$ for all algorithms.
		 The computation times in milliseconds are: $103.24$ for PCP, $28.66$ for IALM, $15.20$ for FPCP, $58.17$ for RPCA-GD and $0.62$ for Denise.
		}
		\label{sp500} 
	\end{figure}
\end{center}

\subsection{Discussion of the Computational Challenges for Denise}
\label{ss:Denise_Extensions_Mods}
In general, the two main computational challenges in deep learning are high-dimensionality and low-regularity of the target map. While they often appear together, in our experimental setup the dimension of the problem is relatively low for deep learning standards not posing an obstruction.
However, the learnability of the highly irregular function performing the robust PCA decomposition truly is a computational challenge shown to be surmounted by Denise in our experiments.

This can be seen, for example, by examining the optimal approximate rates for ReLU neural networks, when approximating continuous functions and smooth functions between Euclidean spaces; see \cite{shen2022optimal} and \cite{MR4319100}, respectively.  Consider the case of $400$ dimensional inputs, as in the $20\times 20$ matrices in Section~\ref{s:stocks_Numerics}.  The former of these optimal approximation theorems guarantees that the uniform approximation of an $\alpha$-H\"{o}lder continuous target function from a compact subset $X\subset P_{20}$ to $P_{k,20}$, to any given precision $\varepsilon>0$, requires a network depending on roughly $\mathcal{O}(\frac1{\varepsilon^{800/\alpha}})$ trainable parameters. The latter one implies that if this target function is sufficiently smooth the number of parameters determining this network can be polynomial in $\frac1{\varepsilon}$.  The same must be true for Denise, which can be seen by relying on the quantitative non-Euclidean universal approximation theorem of \cite[Theorem 9]{JMLR:v23:21-0716} instead of the qualitative version in \cite{kratsios2020noneuclidean} that we used in the proof of Theorem~\ref{theorem_UAT_AlgebroGeo}.   

Theorem~\ref{theorem_univ_approx} shows that the function performing the robust PCA decomposition, namely $f^{\star}$, is highly irregular and thus difficult to learn.  This is because it is not smooth on $P_{20}$, but only continuous on a suitable compact set $K_{\varepsilon}$ thereof.  
Therefore, our experiments illustrate that Denise can actually learn this highly irregular map, which is provably challenging for any deep learning model.  Moreover, it does so while offering competitive performance to any of the state-of-the-art ``matrix-wise'' algorithms.

\section{Proofs}\label{sec:Proofs}
\subsection{Proof of Low Rank Recovery via Universal Approximation}
Let $\left(P_n, dist(A,B) :=\|A-B \|_{\ell_1} \right)$ be the metric space of $n\times n$ symmetric positive semidefinite matrices with real coefficient. Let $C(X,P_{k,n})$ be the set of continuous functions from $X$ to $P_{k,n}$, given any (non-empty) subset $X\subset P_n$. 
Analogously to~\citep{leshno1993multilayer},
the set $C(X,P_{k,n})$ is made  a topological space, by equipping it with the topology of uniform convergence on compacts, also called compact-convergence, which is generated by the sub-basic open sets of the form
\begingroup\makeatletter\def\f@size{8}\check@mathfonts
 \def\maketag@@@#1{\hbox{\m@th\normalsize\normalfont#1}}
\begin{equation*}
B_{K}(f, \varepsilon):= \left\{ g \in C(X,P_{k,n})\left| \sup_{x \in K}\|f(x)-g(x)\|_{\ell_1} \right.< \varepsilon \right\}\,, 
\end{equation*}
\endgroup
where $\varepsilon > 0$, $K \subset X$ compact and $f \in C(X,P_{k,n})$.
  In this topology, a sequence $\{f_j\}_{j \in \nn}$ in $C(X,P_{k,n})$ converges to a function $f \in C(X,P_{k,n})$ if for every non-empty compact subset $K\subseteq X$ and every $\varepsilon>0$ there exists some $N\in \nn$ for which
$$
\sup_{x \in K}\|f_j(x)-f(x)\|_{\ell_1} < \varepsilon \qquad \text{for all } j \geq N
.
$$
This topological space is metrizable.   The topology on $\sqrt{C}(X,P_{k,n})$ is the subspace topology induced by inclusion in $C(X,P_{k,n})$ (see \citep[Chapter 18]{MunkesTop2}).

\begin{proof}[{Proof of Theorem~\ref{theorem_univ_approx}}] 
	For every $M \in P_n$, the map from $\rrflex{n \times k}$ to $\rr$ defined by $U \to \|M-U U^T\|_{\ell_1}$ is continuous, bounded-below by $0$, and for each $\lambda>0$ the set
\begin{equation}
\left\{
U \in \rrflex{n \times k}:\, \|M-U U^T\|_{\ell_1}\leq \lambda
\right\}
,
\label{eq_coercivity}
\end{equation}
is compact in $\rrflex{n \times k}$.  Thus, the map $U \to \|M- U U^T\|_{\ell_1}$ is coercive in the sense of \citep[Definition 2.1]{focardi2012gamma}.
Hence, by \citep[Theorem 2.2% (Weierstrass' theorem)
]{focardi2012gamma}, the set $$\underset{U \in \rrflex{n \times k}}{\argmin}\, \|M - U U^T\|_{\ell_1}$$ is non-empty.  
Furthermore, by the Cholesky decomposition \citep[Theorem 10.9]{higham2002accuracy}, for every $L \in P_{k,n}$ there exists some $U \in \rrflex{n \times k}$ such that $L = U U^{\top}$.  Since, conversely, for every $U\in \rrflex{n \times k}$ the matrix $UU^{\top} \in P_{k,n}$ we obtain (i).

Any given $M \in P_n$ is positive semidefinite and therefore $e_1^{\top}Me_1 \geq 0$, where $e_1\in \rrn$ has entry $1$ in its first component and all other entries equal to $0$.  Therefore, $M_{1,1}= e_1^{\top}Me_1 \geq 0$ and in particular, $\sqrt{M_{1,1}} \in \rr$.  
Therefore, the matrix $\tilde{U}$ defined by $\tilde{U}_{i,j}=\sqrt{M_{1,1}}I_{i=j=1}$, where $I_{i=j=1}=1$ if $1=i=j$ and $0$ otherwise, is in $\rrflex{n\times 1}\subseteq \rrflex{n \times k}$.  Moreover, $\tilde{U}$ satisfies $\|\tilde{U} \tilde U^T\|_{\ell_1}\leq \|M\|_{\ell_1}$.  Thus, by the triangle inequality, the set
	$$
	D_{M} := \left\{
	U \in \rrflex{n \times k}:\,
	\|M - U U^T\|_{\ell_1} \leq 2\|M\|_{\ell_1}
	\right\},
	$$
	is non-empty.  Furthermore, by~\eqref{eq_coercivity} it is compact.  In summary,
\begin{equation}
\emptyset\neq 	\underset{U \in D_{M} %\in P_{k,n},\, \|L- M \|_{\ell_1}\leq 2\|M\|_{\ell_1}
}{\argmin}
\,
 \|M - U U^T\|_{\ell_1} 
= 
\underset{U \in \rrflex{n \times k}}{\argmin} \|M - U U^T\|_{\ell_1}
\label{eq_redux_compact}
.
\end{equation}
	Hence $f(M)$, described by condition (ii), is equivalently characterized by
\begin{equation}
	f(M) \in 
\underset{U \in D_{M}
}{\argmin}\, \|M - U U^T\|_{\ell_1} , \quad \text{for all } M \in P_n
\label{eq_redux_compacts_II_reformulation_of_i}
.
\end{equation}
The advantage of~\eqref{eq_redux_compacts_II_reformulation_of_i} over condition (ii) is that the set 
$
D_{M}
%\left\{L \in P_{k,n}:\, \|L-M\|_{\ell_1} \leq 2 \|M\|_{\ell_1}\right\}
,
$
is %non-empty and 
compact, whereas $\rrflex{n \times k}$ is non-compact.

	For any set $Z$ denote its power-set by $2^Z$.  Define the function $\phi$ by
	$$
\begin{aligned}
	\phi:P_{n} & \rightarrow 2^{\rrflex{n \times k}}, \\ M & {\mapsto} D_{M}.
\end{aligned}
	$$
	Next, we show that $\phi$ is a weakly measurable correspondence in the sense of \citep[Definition 18.1]{guide2006infinite}.  
	%%%
	This amounts to showing that for every open subset $\mathcal{U}\subseteq \rrflex{n \times k}$ the set $\tilde{\mathcal{U}}:= \left\{
	M \in P_n 
	:
	\,
	\phi(M) \cap \mathcal{U} \neq \emptyset
	\right\}$ is a Borel subset of $P_n$. 
	
	To this end, define the function
	$$
\begin{aligned}
	G:P_n \times \rrflex{n \times k} &\rightarrow \rr, \\
	\left(
		M,U
	\right)  &\mapsto 2 \|M\|_{\ell_1} - \|M - U U^T\|_{\ell_1},
\end{aligned}
	$$
	and let $p$ be the canonical projection $P_{n}\times \rrflex{n \times k} \to P_n$ taking $(M,U)$ to $M$.  
	Observe that, for any non-empty open $\mathcal{U}\subseteq \rrflex{n \times k}$ we have that 
	$$
	\tilde{\mathcal{U}}= p\left[
	G^{-1}\left[[0,\infty)\right] \cap (P_n \times \mathcal{U})
	\right]
	.
	$$
	Since $G$ is continuous and $[0,\infty)$ is closed in $\rr$ then $G^{-1}[[0,\infty)]$ is closed.  Since both $\rrflex{n \times k}$ and $P_n$ are metric sub-spaces of $\rrflex{n^2}$ then they are locally-compact, Hausdorff spaces, with second-countable topology.  Thus 
%	their product is metrizable, when \citep[D.28%Proposition 7.1.5
%	]{measuretheoryCohn} implies that $P_d \times U$ is an $F_{\sigma}$ subset since it is open, i.e.: a Borel subset which is a countable union of closed subsets.  Moreover, by 
\citep[Proposition 7.1.5]{measuretheoryCohn} implies that the open set $P_n\times \mathcal{U}= \bigcup_{j \in \nn} K_j$ where $\{K_j\}_{j \in \nn}$ is a collection of compact subsets of $P_n\times \rrflex{n \times k}$.

Since $P_n$ and $\rrflex{n \times k}$ are $\sigma$-compact, i.e. the countable union of compact subsets,  $P_n\times \rrflex{n \times k}$ is also $\sigma$-compact  by \citep[Page 126]{WillardGeneralTopology}.  Let $\{C_i\}_{i \in \nn}$ be a compact cover of $P_n\times \rrflex{n \times k}$.  Since $P_n\times \rrflex{n \times k}$ is Hausdorff (as both $P_n$ and $\rrflex{n \times k}$ are),  each $C_i \cap G^{-1}[[0,\infty)]$ is compact and therefore 
$
\left\{
K_j \cap \left[C_i \cap G^{-1}[[0,\infty)\right]
\right\}_{j,i \in \nn}
$
is a countable cover of $G^{-1}[[0,\infty)]\cap (X\times \mathcal{U})$ by compact sets.  Finally, since $p$ is continuous, and continuous functions map compacts to compacts,
\[
\begin{aligned}
\tilde{\mathcal{U}}=& p\left[
G^{-1}\left[[0,\infty) \cap (P_n \times \mathcal{U})\right]
\right]\\
= &
p\left[
\bigcup_{i,j \in \nn}
 \left[C_i \cap G^{-1}[[0,\infty)\right] \cap K_j
\right]
\\ =
& 
\bigcup_{i,j \in \nn}
p\left[
%\left\{
C_i \cap G^{-1}[
[0,\infty)]
\cap 
K_j
\right];
\end{aligned}
\]
hence $\tilde{\mathcal{U}}$ is an $F_{\sigma}$ subset of $P_n$ and therefore Borel.  In particular, for each open subset $\mathcal{U}\subseteq \rrflex{n \times k}$, the corresponding set $\tilde{\mathcal{U}}$ is Borel.  
Therefore, $\phi$ is a weakly-measurable correspondence taking non-empty and compact values in $2^{\rrflex{n \times k}}$.  
	
	Define, the continuous function
	 $$
	 \begin{aligned}
	 F: P_n \times \rrflex{n \times k} &\rightarrow [0,\infty), \\
	  (M,U)  &\mapsto \|M - U U^T\|_{\ell_1}.
	 \end{aligned}
	 $$
	 The conditions of the \citep[Measurable Maximum Theorem; Theorem 18.19]{guide2006infinite}
	 are met and therefore there exists a Borel measurable function $f$ from $ P_n$ to $\rrflex{n \times k}$ satisfying
	 \begingroup\makeatletter\def\f@size{9}\check@mathfonts
 \def\maketag@@@#1{\hbox{\m@th\normalsize\normalfont#1}}
	 $$
	 f(M)  \in  \underset{U \in D_{M}%\in P_{k,n},\, \|M - L\|_{1}\leq 2\|M\|_{\ell_1}
	 }{\argmin} \|M - U U^T\|_{\ell_1}
	 =
	 \underset{U \in \rrflex{n \times k}}{\argmin} \, \|M - U U^T\|_{\ell_1}
	 ,
	 $$
	 \endgroup
	 for every $M \in P_n$.  This proves (ii).  
	 
	 Fix a Borel probability measure $\pp$ on $P_n$.  Since $P_n$ is separable and metrizable then by \citep[
	 Theorem 13.6]{klenke2013probability} $\pp$ must be a Radon measure.  Moreover, since $\rrflex{n \times k}$ and $P_n$ are locally-compact and second-countable topological spaces, then, the conditions for Lusin's theorem (see \citep[Exercise 13.1.3]{klenke2013probability} for example) are met.  Therefore, for every $0<\varepsilon\leq 1$ there exists a compact subset $K_{\varepsilon}\subseteq P_n$ satisfying $\pp\left( K_{\varepsilon}\right) \geq 1- \varepsilon$ and for which $f$ is continuous on $K_{\varepsilon}$.  That is, $f|_{K_{\varepsilon}} \in C(K_{\varepsilon},\rrflex{n \times k})$.  Moreover, since $\rho$ is continuous, then
	 $$
	 f(\cdot) f(\cdot)^{\top} |_{K_{\varepsilon}} = \rho\circ f|_{K_{\varepsilon}} \in \sqrt{C}(K_{\varepsilon},P_{k,n}).
	 $$
This gives (iii).  
\end{proof}

\begin{proof}[{Proof of Theorem~\ref{theorem_UAT_AlgebroGeo}}]
	Let $\NN[2^{-1}n(n+1),kn][\sigma,\text{narrow}]$ denote the collection of deep feed-forward networks in $\NN[2^{-1}n(n+1),kn]$ of width at-most $\frac{n(n+2k+1)+4}{2}$.  
	Note that  the approximation condition~\eqref{eq_lem_uni_condition} holding for all $\varepsilon>0$, and all $f\in \sqrt{C}(X,P_{k,n})$ is equivalent to the  topological condition $\{\rho\circ \hat{f}\circ \operatorname{vect}:\hat{f}\in \NN[2^{-1}n(n+1),kn][\sigma,\text{narrow}]\}$ is dense in $\sqrt{C}(X,P_{k,n})$ for the uniform convergence on compacts topology.  We establish the later.  
	
	Fix a $\sigma \in C(\rr)$ satisfying condition~\ref{cond_KL}.  By \citep{KidgerLyons2020}, $\NN[2^{-1}n(n+1),kn][\sigma,\text{narrow}]$ is dense $C(\rrflex{n(n+1)/2},\rrflex{kn})$ in the topology of uniform convergence on compacts.  
	
	Let $\phi := h \circ \iota_2\circ \iota_1$, where $\iota_1: X \to P_n$, $\iota_2:P_n\to \mathbb{S}_{n}$ are the inclusion maps.  Since $h$, $\iota_2$, and $\iota_1$ are all continuous and injective, so is $\phi$.  Observe that, $g$ is a continuous bijection with continuous inverse.  Thus, \citep[Proposition 3.7]{kratsios2020noneuclidean} implies that $\NN[2^{-1}n(n+1),kn][\sigma,\text{narrow}]$ is dense in $C(\phi (X),\rrflex{kn})$ if and only if $\NN[g,\phi][\sigma,\text{narrow}]\triangleq 
	\{g\circ \hat{f}\circ \phi:\hat{f}\in \NN[2^{-1}n(n+1),kn][\sigma,\text{narrow}]\}$ is dense in $C(X,\rrflex{n\times k})$.  

Let $R:\mathbb{R}^{n\times k}\ni U\to UU^{\top} \in P_{k,n}$.   
	Consider the map $R_{\star}$ sending any $f \in C(X,\rrflex{n \times k})$ to the map $R\circ f \in \sqrt{C}(X,P_{k,n})$.  
	By \citep[Theorem 46.8]{munkres2018elements} the topology of uniform convergence on compacts on $C(X,\rrflex{n \times k})$ and $C(X,P_{k,n})$ are equal to their respective compact-open topologies (see \citep[page 285]{MunkesTop2} for the definition) and by \citep[Theorem 46.11]{MunkesTop2} function composition is continuous for the compact-open topology; whence,  $R_{\star}$ is continuous. Moreover, by definition, its image is $\sqrt{C}(X,P_{k,n})$ and therefore, $R_{\star}$ is a continuous surjection as a map from $C(X,\rrflex{n \times k})$ to $\sqrt{C}(X,P_{k,n})$.  Since continuous maps send dense subsets of their domain to dense subsets of their image, $R_{\star}\left[\NN[g,\phi][\sigma,\text{narrow}]\right]
	\triangleq 
	\{R\circ g\circ \hat{f}\circ \phi:\hat{f}\in \NN[2^{-1}n(n+1),kn][\sigma,\text{narrow}]\}\subset \NN[\rho,\phi][\sigma]
	$ is dense in $\sqrt{C}(X,P_{k,n})$.  As density is transitive,  $\NN[\rho,\phi][\sigma]$ is dense in  $\sqrt{C}(X,P_{k,n})$.
\end{proof}

\begin{proof}[{Proof of Corollary~\ref{cor_learnability}}]
By Theorem~\ref{theorem_univ_approx} and~\eqref{eq_target_function_optimal_decomposer} the map $f^{\star}:P_n\rightarrow P_{k,n}$ is continuous on $K_{\varepsilon}$.  
Since $K_{\varepsilon}$ is compact, Theorem~\ref{theorem_UAT_AlgebroGeo} implies that there exists some $\hat{f}\in \NN[\rho, h]$ of width at-most 
$\frac{n(n+2k+1)+4}{2}$ 
satisfying: $\max_{x\in K_{\varepsilon}} \|f^{\star}(M)-\hat{f}(M)\|_{\ell_1}<\varepsilon$.  
\end{proof}
%%%%%%%%%%%%%%%%%%%%%%%%%%%%%%%%%%%%%%%%%%%%%%%%%%%%%%%%%%%%%%%%%%%%%%%%%%%%%%%%%%%%%%%
\subsection{Proof of Convergence of Supervised Denise  to a Solution Operator of the Learning Problem}

\begin{proof}[Proof of Theorem~\ref{thm:convergence to optimal decomp}]
By our assumption on $X$ it follows from Corollary~\ref{cor_learnability} that for any $\varepsilon > 0$ there exists some $D$ and weights $\tilde \theta_D$ such that $\hat f_{\tilde \theta_D} \in \mathcal{N}_{\rho, h}^{\sigma, D}$ and 
\begin{equation*}
\max_{M \in X}\,
	\left\|
	f^{\star}\left(M\right) - 
	\hat f_{\tilde \theta_D}(M)
	\right\|_{\ell_1}
	<\varepsilon .
\end{equation*}
Since expectations are taken with respect to $\P$ which is supported on $\mathcal{Z}$ and since the weights $\theta_D$ are chosen to optimize the loss function, we  have $\Phi(\theta_D) \leq \Phi(\tilde \theta_D)$ and hence
\begin{align*}
\Phi(\theta_D) & = \E_{(M,L) \sim \P} \left[  \| \hat f_{ \theta_D}(M) - f^\star(M) \|_{\ell_1} \right]  \\
& \leq  \E_{(M,L) \sim \P} \left[  \| \hat f_{\tilde \theta_D}(M) - f^\star(M) \|_{\ell_1} \right] \\
& \leq  \varepsilon .
\end{align*}
Hence, we can conclude that for any fixed $\varepsilon>0$, there exists a $D_1>0$ such that for all $D>D_1$, we get
\begin{equation*}
\E_{(M,L) \sim \P}   \left[  \|\hat f_{\theta_D}(M) - f^\star(M) \|_{\ell_1} \right] \leq \varepsilon\,.
\end{equation*} 
In other words, we have that 
\begin{equation*}
\E_{(M,L) \sim \P}   \left[  \|\hat f_{\theta_D}(M) - f^\star(M) \|_{\ell_1} \right]\xrightarrow{D \to \infty} 0 \, ,
\end{equation*} 
which concludes the proof.
\end{proof}

%%%%%%%%%%%%%%%%%%%%%%%%%%%%%%%%%%%%%%%%%%%%%%%%%%%%%%%%%%%%%%%%%%%%%%%%%%%%%%%%%%%%%%%%

\subsection{Proof of Convergence of the Monte Carlo Approximation}
The following Monte Carlo convergence analysis is based on \citep[Section 4.3]{lapeyre2019neural}. In comparison to them, we do not need the additional assumptions that were essential in \citep[Section 4.3]{lapeyre2019neural}, i.e. that all minimizing neural network weights generate the same neural network output.

\subsubsection{Convergence of Optimization Problems}
The following lemma is a consequence of \cite[Corollary 7.10]{ledoux1991m} and \cite[Sec. 2.6, Lemma A1 \& Theorem A1 and discussion thereafter]{rubinstein1993discrete}.
\begin{lemma}
\label{lemma:convergencelocallyuniform}
Let $(\xi_i)_{i \geq 1}$ be a sequence of $i.i.d$ random variables with values in $\mathcal{S}$ and $h:\mathbb{R}^d\times \mathcal{S}\to \mathbb{R}$ be a measurable function.
Assume that a.s., the function $\theta\in \mathbb{R}^d \mapsto h(\theta, \xi_1)$ is continuous and for all $C>0$, $\E(\sup_{|\theta|_2 \leq C}|h(\theta, \xi_1)|)< + \infty$. Then, a.s. $f_N:  \mathbb{R}^d \to \R, \theta  \mapsto \frac{1}{N}\sum_{i=1}^{N}h(\theta, \xi_i)$ converges locally uniformly to the continuous function $f: \mathbb{R}^d \to \R, \theta \mapsto \E(h(\theta, \xi_1))$,
\begin{equation*}
\lim_{N\to\infty} \sup_{|\theta|_2\leq C} \left|\frac{1}{N}\sum_{i=1}^{N}h(\theta, \xi_i) -   \E(h(\theta, \xi_1))\right| = 0 \qquad a.s.
\end{equation*}
Moreover, let the random variables $v_n = \inf_{x\in K} f_n(x)$, consider a minimizing sequence $(x_n)_{n=0}^{\infty}$, given by $f_n(x_n) = \inf_{x\in K} f_n(x) $ and let $v^{*} = \inf_{x\in K}f(x)$ and $\mathcal{K}^* = \{ x\in K: f(x) =v^* \}$. Then $v_n \to v^*$ and $d(x_n, \mathcal{K}^*) \to 0 $ a.s.
\end{lemma}

\subsubsection{Strong Law of Large Numbers}
Let  $(M_j, L_j)_{j \geq 1}$ be i.i.d. random variables taking values in $\mathcal{Z} = X \times f^\star(X) \subset \R^{n \times n} \times  \R^{n \times n} =: \mathcal{S}$.  We first remark that $\mathcal{S}$  is a separable Banach space.
Moreover, since  $f^\star(X)$ is compact as the continuous image of the compact set $X$, it is bounded. Hence, there exists a bounded continuous function $\iota:  \R^{n \times n} \to  \R^{n \times n}$ such that $\iota \vert_{f^\star(X)}$ is the identity.
Then we define
\begin{equation*}
h(\theta, (M_j, L_j)) :=   \| \iota(L_j) - \hat{f}_{\theta}(M_j)  \|_{\ell_1} 
\end{equation*}
where $\hat{f}_\theta \in \mathcal{N}_{\rho, h}^{\sigma, D}$ is a neural network of depth $D$ with the weights $\theta$.

\begin{lemma}
\label{lem:properties for MC conv thm}
The following properties are satisfied.
\begin{itemize}
\item[$(\mathcal{P}_1)$] There exists $\kappa>0$ such that for all $ Z = (M,L) \in \mathcal{Z}$ and $ \theta \in \tilde\Theta_D$ we have $\| \hat f_{\theta}(M)\|_{\ell_1} \leq \kappa$. 
\item[$(\mathcal{P}_2)$] Almost-surely the random function $\theta\in\tilde\Theta_M \mapsto \hat f_{\theta}$ is uniformly continuous.
\end{itemize}
\end{lemma}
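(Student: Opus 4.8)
Both assertions follow from a single observation: the map $(\theta,M)\mapsto \hat f_\theta(M)$ is \emph{jointly} continuous on $\tilde\Theta_D\times X$, and this set is compact. The plan is therefore to first establish joint continuity, then read off $(\mathcal{P}_1)$ as the statement that a continuous real-valued function on a compact set is bounded, and $(\mathcal{P}_2)$ as the statement that such a function is uniformly continuous.

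For the joint continuity, recall $\hat f_\theta=\rho\circ f_\theta\circ h$. The operator $h$ is linear, hence continuous, and $\rho(X)=g(X)g(X)^\top$ with $g$ linear, so $\rho$ is a (degree-two) polynomial map and is continuous. For $f_\theta$, I would write it as the composition of affine layers $x\mapsto W_\ell x + b_\ell$ interleaved with the componentwise application of $\sigma$; each affine layer is continuous jointly in $\bigl((W_\ell,b_\ell),x\bigr)$ (it is bilinear in $(W_\ell,x)$ plus a linear term in $b_\ell$), and since $\sigma\in C(\rr)$ the componentwise activation is continuous. A finite composition of jointly continuous maps is jointly continuous, so $(\theta,x)\mapsto f_\theta(x)$ is continuous on $\Theta_D\times\rrflex{n(n+1)/2}$, and consequently $(\theta,M)\mapsto \hat f_\theta(M)=\rho\bigl(f_\theta(h(M))\bigr)$ is continuous on $\Theta_D\times \mathbb{S}_n$.

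Next I would note that $\tilde\Theta_D=\{\theta\in\Theta_D:\lvert\theta\rvert_2\leq D\}$ is a closed and bounded subset of a finite-dimensional Euclidean space, hence compact, and $X\subset P_n$ is compact by assumption; thus $\tilde\Theta_D\times X$ is compact. For $(\mathcal{P}_1)$, the function $(\theta,M)\mapsto \lVert \hat f_\theta(M)\rVert_{\ell_1}$ is continuous, hence attains a finite maximum $\kappa:=\max_{(\theta,M)\in\tilde\Theta_D\times X}\lVert \hat f_\theta(M)\rVert_{\ell_1}$. Since $\hat f_\theta(M)$ depends only on $M$ and every $Z=(M,L)\in\mathcal{Z}$ has $M\in X$, we get $\lVert \hat f_\theta(M)\rVert_{\ell_1}\leq\kappa$ for all $Z=(M,L)\in\mathcal{Z}$ and $\theta\in\tilde\Theta_D$. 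For $(\mathcal{P}_2)$, restricting the jointly continuous map to the compact $\tilde\Theta_D\times X$ makes it uniformly continuous; in particular, for every $\varepsilon>0$ there is $\delta>0$ with $\lVert \hat f_\theta(M)-\hat f_{\theta'}(M)\rVert_{\ell_1}<\varepsilon$ whenever $\lvert\theta-\theta'\rvert_2<\delta$, uniformly in $M\in X$, which is exactly uniform continuity of $\theta\mapsto\hat f_\theta$ as a map into $C(X,P_{k,n})$ with the supremum ($\ell_1$-entrywise) norm. As this map carries no dependence on the samples $(M_j,L_j)$, the ``almost surely'' qualifier is automatic.

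I do not expect a genuine obstacle here; the only point needing a little care is spelling out \emph{joint} continuity of a feedforward network in its weights and inputs simultaneously (rather than the more familiar separate continuities) and confirming compactness of $\tilde\Theta_D$, after which the two properties are immediate consequences of the extreme value theorem and the Heine--Cantor theorem.
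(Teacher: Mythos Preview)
Your argument is correct. It differs from the paper's own proof in one substantive way: for $(\mathcal{P}_1)$ the paper does not use compactness of $X$ at all, but instead assumes the activation is a sigmoid, so that every hidden layer has outputs bounded by $1$ and hence $\|\hat f_\theta(M)\|_{\ell_1}$ is bounded in terms of the last-layer weights alone, uniformly in $M$. Your route---joint continuity of $(\theta,M)\mapsto\hat f_\theta(M)$ followed by the extreme value theorem on the compact product $\tilde\Theta_D\times X$---has the advantage of working for any $\sigma\in C(\rr)$ satisfying Assumption~\ref{cond_KL}, whereas the paper's bound requires the extra (and not explicitly stated) hypothesis that $\sigma$ is bounded. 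The trade-off is that the paper's bound is input-independent, while yours leans on the compactness of $X$ from Assumption~\ref{assumption_measurability}; since that compactness is already part of the standing hypotheses, your version is the more economical one. For $(\mathcal{P}_2)$ the two proofs are essentially the same: both invoke Heine--Cantor on the compact $\tilde\Theta_D$, and your additional remark that the map is deterministic in the samples (so the ``almost surely'' is vacuous) is a useful clarification the paper leaves implicit.
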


\begin{proof}
By definition of the neural networks with sigmoid  activation functions (in particular having bounded outputs), all neural network outputs are bounded in terms of the norm of the network weights, which is assumed to be bounded, not depending on the norm of the input.

Since the activation functions are continuous, also the neural networks are continuous with respect to their weights $\theta$, which implies that also $\theta\in\tilde\Theta_M \mapsto \hat f_{\theta}$ is continuous for any fixed input. Since $\tilde{\Theta}_M$ is compact, this automatically yields uniform continuity almost-surely and therefore finishes the proof of $(\mathcal{P}_2)$.
\end{proof}

\begin{proof}[Proof of Theorem \ref{thm:MC convergence}.]
We apply Lemma \ref{lemma:convergencelocallyuniform} to the sequence of $i.i.d$ random function $h(\theta, (M_j, L_j))$. With  $(\mathcal{P}_1)$ of Lemma \ref{lem:properties for MC conv thm} and since $\iota$ is bounded we know that also 
\begin{equation*}
  \vert h(\theta, (M_j, L_j)) \vert  \leq   \| \iota(L_j)  \|_{\ell_1} +  \| \hat{f}_{\theta}(M_j)  \|_{\ell_1} 
\end{equation*}
is bounded for $\theta \in \tilde\Theta_D$. Hence, there exists some $B > 0$ such that
\begin{equation}
\label{equ:dominating bound loss function}
\E_{ (M_j, L_j) \sim \P}\left[\sup_{\theta \in \tilde\Theta_D} \vert h(\theta,  (M_j, L_j) ) \vert\right] < B < \infty
\end{equation}
By $(\mathcal{P}_2)$ of Lemma \ref{lem:properties for MC conv thm}, the function $\theta \mapsto h(\theta)$ is continuous. Therefore, we can apply Lemma \ref{lemma:convergencelocallyuniform}, yielding that almost-surely for $N \to \infty$ the function 
\begin{equation*}\label{equ:unif conv 1}
\theta \mapsto \frac{1}{N} \sum_{j=1}^{N} h(\theta, (M_j, L_j) ) = \hat\Phi_s^N(\theta)
\end{equation*}
converges uniformly on $\tilde\Theta_M$ to 
\begin{equation*}\label{equ:unif conv 2}
\theta \mapsto \E_{\P }[h(\theta, (M_1, L_1))] = \Phi_s(\theta),
\end{equation*} 
where we used that $\iota$ is the identity on $f^\star(X)$.

Let  $\Theta^{\min}_M \subset  \Theta_D $ be the subset of weights that minimize $\Phi_s$.
We deduce from Lemma \ref{lemma:convergencelocallyuniform} that $d(\theta_{D,N},\Theta^{\min}_D)\to 0$ a.s. when $N\to \infty$. 
Then there exists a sequence $(\hat\theta_{D,N})_{N \in \N}$ in $\Theta_D^{\min}$ such that $\lvert \theta_{D,N} - \hat\theta_{D,N} \rvert_2 \to 0$ a.s. for $N \to \infty$.
The uniform continuity of the random functions $\theta \mapsto \hat f_{\theta}$ on $\tilde{\Theta}_D$ implies that $\vert \hat f_{\theta_{D,N}}- \hat f_{\hat\theta_{D,N}} \rvert_2 \to 0$ a.s. when $N \to \infty$. 
By continuity of $\iota$ and the $\ell_1$-norm this yields $\lvert h(\theta_{D,N}, (M_1, L_1) ) - h(\hat\theta_{D,N}, (M_1, L_1) ) \rvert \to 0$ a.s. as $N \to \infty$.
With \eqref{equ:dominating bound loss function} we can apply dominated convergence which yields
\begin{align*}
\lim_{N \to \infty} \E_{\P} & \left[ \lvert h(\theta_{D,N}, (M_1, L_1) ) - h(\hat\theta_{D,N}, (M_1, L_1) ) \rvert \right] = 0.
\end{align*}
Since for every integrable random variable $Z$ we have $0 \leq \lvert \E[Z] \rvert \leq \E[\lvert Z \rvert] $ and since $\hat\theta_{D,N}\in \Theta_D^{\min}$ we can deduce
\begin{eqnarray}
\label{equ: MC convergence}
\lim_{N \to \infty} \Phi_s(\theta_{D,N}) &=& \lim_{N \to \infty} \E_{\P}\left[  h(\theta_{D,N}, (M_1, L_1) ) \right] \nonumber \\
&=& \lim_{N \to \infty} \E_{\P}\left[  h(\hat\theta_{D,N}, (M_1, L_1) ) \right]\nonumber  \\
&=& \Phi_s(\theta_D).
\end{eqnarray}

We define $N_0 := 0$ and for every $D \in \N$
\begin{equation*}
N_D := \min\big\{ N \in \N \; \vert \; N > N_{D-1},
\lvert \Phi_s(\theta_{D,N}) - \Phi_s(\theta_{D}) \rvert  \leq \tfrac{1}{D} \},
\end{equation*}
which is possibly due to \eqref{equ: MC convergence}. Then Theorem \ref{thm:convergence to optimal decomp} implies that
\begin{equation*}
\E_{\P}   \left[  \|\hat f_{\theta_{D,N_D}}(M) - f^\star(M) \|_{\ell_1} \right] = \Phi_s(\theta_{D,N_D}) 
 \leq \tfrac{1}{D} + \Phi_s(\theta_D) \xrightarrow{D\to \infty} 0,
\end{equation*}
which concludes the proof.
\end{proof}

\subsection{Proof of Convergence of Denise in Unsupervised Learning Task}

\begin{proof}[Proof of Theorem~\ref{thm:convergence theoretical unsupervise loss}]
    Fix $\epsilon > 0$. Let $f_\epsilon \in \sqrt{C}(\tilde{X},P_{k,n}) $ be such that $\E_{(M,L) \sim \Tilde{\P}} [ \lVert M - f_\epsilon(M) \rVert_{\ell_1}] < \Phi_{\min} + \epsilon$. From Theorem~\ref{theorem_UAT_AlgebroGeo} we know that there exists some depth $D$ and weights $\Tilde{\theta}_D$ such that the resulting neural network $\hat{f}_{\Tilde{\theta}_D} \in \NN[g,h]$ satisfies $\max_{M \in \tilde X}\, \| f_\epsilon\left(M\right) - 
	\hat{f}_{\Tilde{\theta}_D}(M) \|_{\ell_1} <\epsilon$.
    Since $\theta_D \in \Theta_D$ is chosen to minimise $\Phi_u$, we get by triangle inequality
    \begin{equation*}
    \begin{split}
        \Phi_u (\theta_D) 
        &\leq \Phi_u (\tilde{\theta}_D) = \E_{(M,L) \sim \Tilde{\P}} [ \lVert M - \hat{f}_{\Tilde{\theta}_D}(M) \rVert_{\ell_1}] \\ 
        &\leq \E_{(M,L) \sim \Tilde{\P}} [ \lVert M - f_\epsilon(M) \rVert_{\ell_1}] + \E_{(M,L) \sim \Tilde{\P}} [ \lVert f_\epsilon(M) - \hat{f}_{\Tilde{\theta}_D}(M) \rVert_{\ell_1}] \\
        &\leq \Phi_{\min} +2\epsilon.
    \end{split}
    \end{equation*}
    Since $\Phi_u (\theta_D) \geq \Phi_{\min}$ by definition, we have $\lvert  \Phi_u (\theta_D)  - \Phi_{\min}\rvert \leq \lvert  \Phi_u (\tilde{\theta}_D)  - \Phi_{\min}\rvert \leq  2 \epsilon $.
    Using that $\Phi_u (\theta_D)$ is decreasing in $D$, we can conclude that
    $\lvert  \Phi_u (\theta_D)  - \Phi_{\min}\rvert  \xrightarrow{D \to \infty} 0$.
\end{proof}

\begin{proof}[Proof of Theorem~\ref{thm:MC convergence unsupervised}]
    The first two claims follow analogously as in the proof of Theorem~\ref{thm:MC convergence}. Choosing the sequence $N_D$ similarly as in the proof of Theorem~\ref{thm:MC convergence} and combining these first two results and Theorem~\ref{thm:convergence theoretical unsupervise loss} via triangle inequality proves that $\hat \Phi_u^N(\theta_{D,N_D})$ converges to $\Phi_{\min}$.
\end{proof}

\section{Discussion}
We provide a simple deep learning based algorithm  to decompose positive semidefinite matrices into low rank plus sparse matrices.
After the deep neural network was trained, only an evaluation of it is needed to decompose any new unseen matrix. Therefore, the computation time is negligible, which is an undeniable advantage in comparison with the classical algorithms. To support our claim, we provided theoretical guarantees for the recovery of the optimal decomposition. To the best of our knowledge, this is the first time that neural networks are used to learn the low rank plus sparse decomposition for any unseen matrix. The obtained results  are very promising. We believe that this subject merits to be further investigation for all  online applications where the decomposition must be instantaneous and stable with respect to the inputs.

%M 
In future work, Denise's algorithm can be extended to no longer require $M$ to be positive semidefinite such that $M$ can be any $n\times n$ matrix.  This can be achieved by replacing the feature map $h$ with a standard vectorization operation.  Furthermore, Denise's deep learning architecture can be modified to encode different structures in $L$ by modifying its output layer.  E.g.\ full-rank matrices can be produced using the output layer/readout-map described in \citep[Section 3.4.2]{JMLR:v23:21-0716} and examples of other structures which can be encoded in $L$ by modifying Denise's output layers can be borrowed from \cite{JMLR:v12:meyer11a}.

\if\addackn1
\subsubsection*{Acknowledgement}
We thank Hartmut Maennel, Maximilian Nitzschner, Thorsten Schmidt and Martin Stefanik for valuable remarks and helpful discussions. Moreover, the authors would like to acknowledge support for this project from the Swiss National Science Foundation (SNF grant 179114) and partially funded by the NSERC Discovery grant (RGPIN-2023-04482).
\fi

\bibliography{DeniseDeepRPCA}
\bibliographystyle{tmlr}

\end{document}